\title{A Generalized Scalarization Method for Evolutionary Multi-Objective Optimization}
\author{
    Ruihao Zheng\textsuperscript{}, 
    Zhenkun Wang\textsuperscript{}\thanks{Corresponding author.}
}
\def\ie{\emph{i.e.}}
\def\eg{\emph{e.g.}}
\renewcommand{\figurename}{Figure}
\newtheorem{myDef}{Definition}
\newtheorem{myThe}{Theorem}
\begin{document}

\maketitle

\begin{abstract}
The decomposition-based multi-objective evolutionary algorithm (MOEA/D) transforms a multi-objective optimization problem (MOP) into a set of single-objective subproblems for collaborative optimization. Mismatches between subproblems and solutions can lead to severe performance degradation of MOEA/D. Most existing mismatch coping strategies only work when the $L_{\infty}$ scalarization is used. A mismatch coping strategy that can use any $L_{p}$ scalarization, even when facing MOPs with non-convex Pareto fronts, is of great significance for MOEA/D. This paper uses the global replacement (GR) as the backbone. We analyze how GR can no longer avoid mismatches when $L_{\infty}$ is replaced by another $L_{p}$ with $p\in [1,\infty)$, and find that the $L_p$-based ($1\leq p<\infty$) subproblems having inconsistently large preference regions. When $p$ is set to a small value, some middle subproblems have very small preference regions so that their direction vectors cannot pass through their corresponding preference regions. Therefore, we propose a generalized $L_p$ (G$L_p$) scalarization to ensure that the subproblem's direction vector passes through its preference region. Our theoretical analysis shows that GR can always avoid mismatches when using the G$L_p$ scalarization for any $p\geq 1$. The experimental studies on various MOPs conform to the theoretical analysis.

\end{abstract}

\section{Introduction}

The multi-objective optimization problem (MOP) can be written as
\begin{equation}\label{MOP}
    \begin{aligned}
        \mbox{minimize}\quad & \mathbf{f}(\mathbf{x}) = (f_1(\mathbf{x}),\ldots,f_m(\mathbf{x}))^{\intercal}, \\
        \mbox{subject to}\quad & \mathbf{x} \in \Omega,
    \end{aligned}
\end{equation}
where $\mathbf{x} = (x_1,\ldots, x_n)^{\intercal}$ is a decision vector (also called solution), and $\Omega \subset \mathbb{R}^n$ denotes the decision space. $\mathbf{f}: \mathbb{R}^n \rightarrow \mathbb{R}^m$ is composed of $m$ objective functions, and $\mathbf{f}(\mathbf{x})$ is the objective vector corresponding to $\mathbf{x}$.

The multi-objective evolutionary algorithm based on decomposition (MOEA/D) is a popular framework for dealing with MOPs~\cite{zhang2007moea}. It converts an MOP into a set of single-objective subproblems to optimize them simultaneously. The subproblem function is defined by a scalarization method, where the family of the $L_p$ ($p\geq 1$) scalarization is often used. The weighted sum (WS) method and the Tchebycheff (TCH) method are the $L_1$ scalarization and the $L_\infty$ scalarization, respectively. Any scalarization method can be used in MOEA/D, and each has its own strengths and weaknesses~\cite{hansen2000use,wang2016decomposition}.

In MOEA/D, each subproblem is associated with one solution. For each iteration, the evolutionary operations are conducted with respect to each subproblem to generate a new solution; this new solution is used to replace several neighboring subproblems' original solutions if it is better than these solutions. As stated in \cite{wang2014replacement,li2013stable}, there may exist mismatches between solutions and subproblems and the above replacement strategy can lead to severe performance degradation. Sequentially, several MOEA/D variants with mismatch coping strategies have been proposed, such as MOEA/D-GR~\cite{wang2014replacement}, MOEA/D-STM~\cite{li2013stable}, MOEA/D-IR~\cite{li2014interrelationship}, MOEA/D-AMOSTM~\cite{wu2017matching}, and MOEA/D-2TCHMFI~\cite{ma2017tchebycheff}. Most of these algorithms employ the TCH method to define subproblems. Because the TCH-based subproblem has a good property, \ie, the intersection between its direction vector and the Pareto front is optimal for it. Nevertheless, the TCH method has some specific weaknesses compared to other $L_p$ scalarization methods. It is non-smooth, non-differentiable, and may cause slow convergence of MOEA/D, making it difficult to use in many scenarios. Therefore, it is of great significance for MOEA/D to develop mismatch coping strategies that enable using any $L_p$ scalarization method, even when facing MOPs with non-convex Pareto fronts.

This paper adopts MOEA/D-GR as the backbone. It matches the most suitable subproblem for each new solution according to the function values over all subproblems. Our analysis first reveals that MOEA/D-GR can avoid mismatches only when the TCH method (\ie, $L_\infty$) is used. When the TCH method is replaced with another $L_p$ scalarization for any $p\in [1,\infty)$, MOEA/D-GR fails to choose the appropriate subproblem for each solution. For example, MOEA/D-GR always chooses boundary subproblems to update if the TCH method is substituted with the WS method. Our analysis demonstrates that these mismatches can be attributed to $L_p$-based ($1\leq p<\infty$) subproblems having inconsistently large preference regions. When $p$ is set to a small value, the corresponding subproblems have preference regions with extremely imbalanced sizes. The boundary subproblem's preference region is much larger than that of the middle subproblem (as shown in \figurename~\ref{fig:preference_region_NS}). Such an imbalance leads to severe mismatches in MOEA/D-GR. 

To fill this gap, we propose a generalized $L_p$ (G$L_p$) scalarization for the subproblem definition. The G$L_p$-based subproblems can have uniform preference regions, no matter what the value of $p$ ($p\geq 1$) is set to. We apply the G$L_p$ scalarization to MOEA/D-GR and term the new algorithm as MOEA/D-GGR. The effectiveness of MOEA/D-GGR is validated with different $p$-values on various continuous and combinatorial MOPs. The results indicate that our method can avoid mismatches using the scalarization of any norm (\ie, $p$ can be $1,2\ldots,\infty$), even in dealing with MOPs with non-convex or other complex Pareto fronts. Our method significantly expands the applicability of MOEA/D.


\section{Background}
\subsection{Basic Concepts}

\begin{myDef}
For two objective vectors $\mathbf{u} = (u_1,\ldots, u_m)^{\intercal}$ and $\mathbf{v} = (v_1,\ldots, v_m)^{\intercal}$, $\mathbf{u}$ is said to \textbf{\em dominate} $\mathbf{v}$, if $u_i \leq v_i$ for all $i\in \{1,\ldots,m\}$ and $u_{j}<v_{j}$ for at least one $j\in \{1,\ldots,m\}$.
\end{myDef}

\begin{myDef}
$\mathbf{x}^{*}$ is called the \textbf{\em Pareto optimal} solution, if there is no $\mathbf{x}\in \Omega$ such that $\mathbf{f}(\mathbf{x})$ dominates $\mathbf{f}(\mathbf{x}^{*})$. Correspondingly, $\mathbf{f}(\mathbf{x}^*)$ is called the \textbf{\em non-dominated} vector.
\end{myDef}

\begin{myDef}
The set of all Pareto optimal solutions is called the \textbf{\em Pareto set} (denoted as $PS$), and its image in the objective space is called the \textbf{\em Pareto front} (denoted as $PF$).
\end{myDef}

\begin{myDef}
The \textbf{\em ideal point} $\mathbf{z}^{ide} = (z^{ide}_1,\ldots,z^{ide}_m)^\intercal$ is defined as $z^{ide}_i = \mathop{\min}\{{f_{i}}(\mathbf{x})|\mathbf{x}\in\Omega\}$, for $i=1,\ldots,m$.
\end{myDef}

\begin{myDef}
The \textbf{\em utopian point} $\mathbf{z}^{uto} = (z^{uto}_1,\ldots,z^{uto}_m)^\intercal$ is defined as $z^{uto}_i = z^{ide}_i-\varepsilon_i$, for $i=1,\ldots,m$, where $\varepsilon_i>0$ is a relatively small computationally significant scalar.
\end{myDef}

\subsection{$L_p$ Scalarization}
The $L_p$ scalarization defines the single-objective optimization subproblem as
\begin{equation}\label{eqn:p-norm}
    g^{lp}(\mathbf{x}|\mathbf{w},\mathbf{z}^*) = \left(\sum_{i=1}^m (w_i|f_i(\mathbf{x})-z^*_i|)^p\right)^{\frac{1}{p}},
\end{equation}
where $\mathbf{w}=(w_1,\ldots,w_m)^{\intercal}$ is a weight vector that satisfies $w_i\geq0$ for each $i \in \{1,\ldots,m\}$ and $\sum_{i=1}^m w_i=1$. The reference vector $\mathbf{z}^{*} = (z^{*}_1,\ldots,z^{*}_m)^\intercal$ is usually set to $\mathbf{z}^{ide}$. By using a set of uniformly distributed weight vectors $\{\mathbf{w}^j\}_{j=1}^{N}$ in Eq.~(\ref{eqn:p-norm}), $N$ single-objective subproblems can be achieved.

Let $\mathbf{z}^*=\mathbf{z}^{ide}$ and $p=1$, Eq. \eqref{eqn:p-norm} can be simplified as
\begin{equation}\label{eqn:WS}
    g^{ws}(\mathbf{x}|\mathbf{w},\mathbf{z}^*) = \sum_{i=1}^m w_i (f_i(\mathbf{x})-z_i^*).
\end{equation}
Eq. \eqref{eqn:WS} is also known as the WS method. When $p\rightarrow \infty$, Eq. \eqref{eqn:p-norm} can be written as
\begin{equation}\label{eqn:TCH}
    g^{tch}(\mathbf{x}|\mathbf{w},\mathbf{z}^*) = \max_{i\in\{1,\ldots,m\}} w_i(f_i(\mathbf{x})-z^*_i).
\end{equation}
Eq. \eqref{eqn:TCH} is also known as the TCH method. For any Pareto optimal solution $x^*$, there exists a weight vector such that $x^*$ is the optimal solution of the corresponding TCH-based subproblem~\cite{miettinen2012nonlinear}. For the subproblem defined by Eq. \eqref{eqn:TCH} with a weight vector $\mathbf{w}^{j}$, we refer $\boldsymbol{\lambda}^j=(\frac{1}{w_1^j},\ldots,\frac{1}{w_i^j})^{\intercal}$ to as its direction vector. The intersection between $\boldsymbol{\lambda}^j$ and the $PF$ is the optimal objective vector of this subproblem~\cite{qi2014moea}.

\subsection{MOEA/D Framework}
MOEA/D employs $N$ uniformly distributed weight vectors $\{\mathbf{w}^j\}_{j=1}^{N}$ to generate $N$ subproblems. MOEA/D calculates the Euclidean distance between every two subproblems' weight vectors and uses these distances to define the mating and replacement neighborhoods for each subproblem. MOEA/D maintains a population with $N$ solutions $\mathbf{x}^1,\ldots,\mathbf{x}^N$, where $\mathbf{x}^j$ is associated with the $j$-th subproblem for $j=1,\ldots,N$. The mating or replacement neighborhood of $\mathbf{x}^j$ (denoted as $B_m^j$ or $B_r^j$) consists of the solutions of the $j$-th subproblem's the $T_m$ or $T_r$ closest neighbors. 
At each iteration, MOEA/D conducts operations with respect to the $j$-th subproblem for each $j\in\{1,\ldots,N\}$ as follows:
\begin{enumerate}[a)]
    \item Conduct reproduction operators on solutions randomly selected from $B_m^j$ to generate a new solution $\mathbf{x}^{new}$.
    \item For each solution $\mathbf{x}^k$ of $B_r^j$, replace $\mathbf{x}^k$ by $\mathbf{x}^{new}$ if $\mathbf{x}^{new}$ is better than it.
\end{enumerate}

\subsection{GR}
In MOEA/D, the new solution is only allowed to update the solutions of the neighboring subproblems. As shown in \figurename~\ref{fig:why_GR}, the new solution of the $4$-th subproblem is only allowed to update the $4$-th subproblem's three neighboring subproblems' solutions (\ie, solutions of the $3$-rd to the $5$-th subproblems). However, this replacement may cause mismatches between subproblems and solutions, thereby severely hindering the algorithm's performance. For example, the new solution in \figurename~\ref{fig:why_GR} cannot benefit the neighbors of the $4$-th subproblem but can facilitate the convergence of the $7$-th and $8$-th subproblems.

\begin{figure}[ht]
\centering
\includegraphics[width=0.4\textwidth]{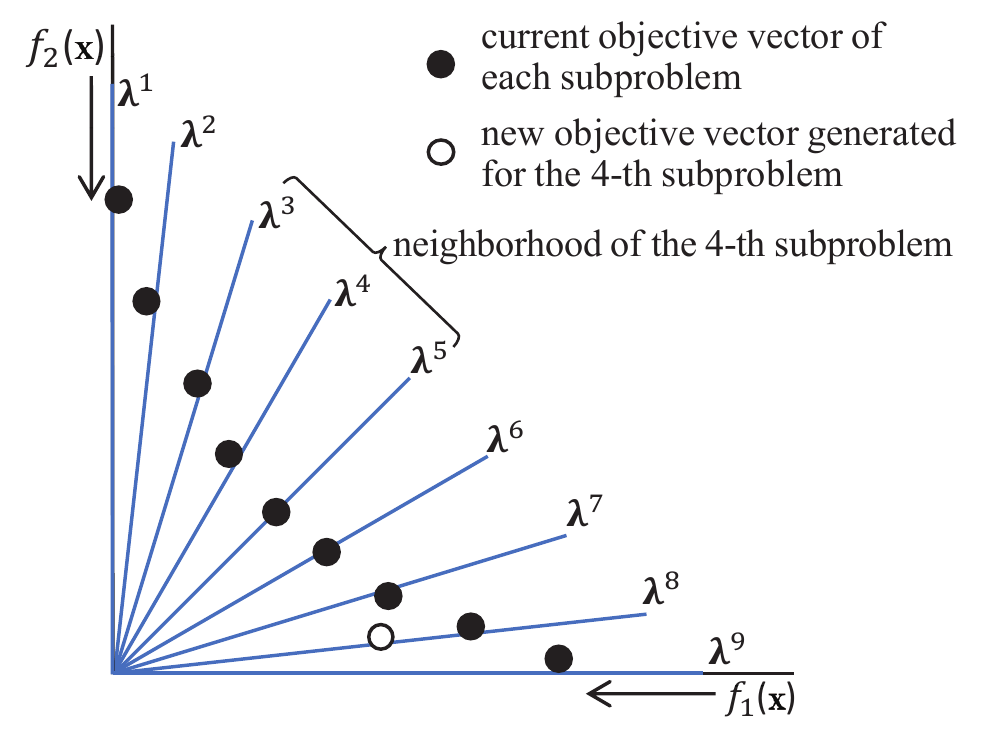}
\caption{A case of mismatch in MOEA/D.}
\label{fig:why_GR}
\end{figure}

GR globally selects suitable updating subproblems for each new solution. For the new solution $\mathbf{x}^{new}$, GR first locates the most appropriate subproblem via
\begin{equation}
 \begin{array}{rl}
 j=\mathop{\arg\min}\limits_{k\in\{1,\ldots,N\}}\{g^{tch}(\mathbf{x}^{new}|\mathbf{w}^k,\mathbf{z}^*)\}.
 \end{array}
 \label{mSub}
\end{equation}
Thereafter, the solutions of subproblems within the neighborhood of the $j$-th subproblem are assigned to $\mathbf{x}^{new}$ for updating. Since MOEA/D-GR utilizes the new solution's subproblem function values to determine its most appropriate subproblem, the choice of the scalarization method that defines the subproblem function is critical.

\begin{figure*}[t]
	\centering
	\subfigure[$L_1$]{
	    \includegraphics[width=0.29\textwidth]{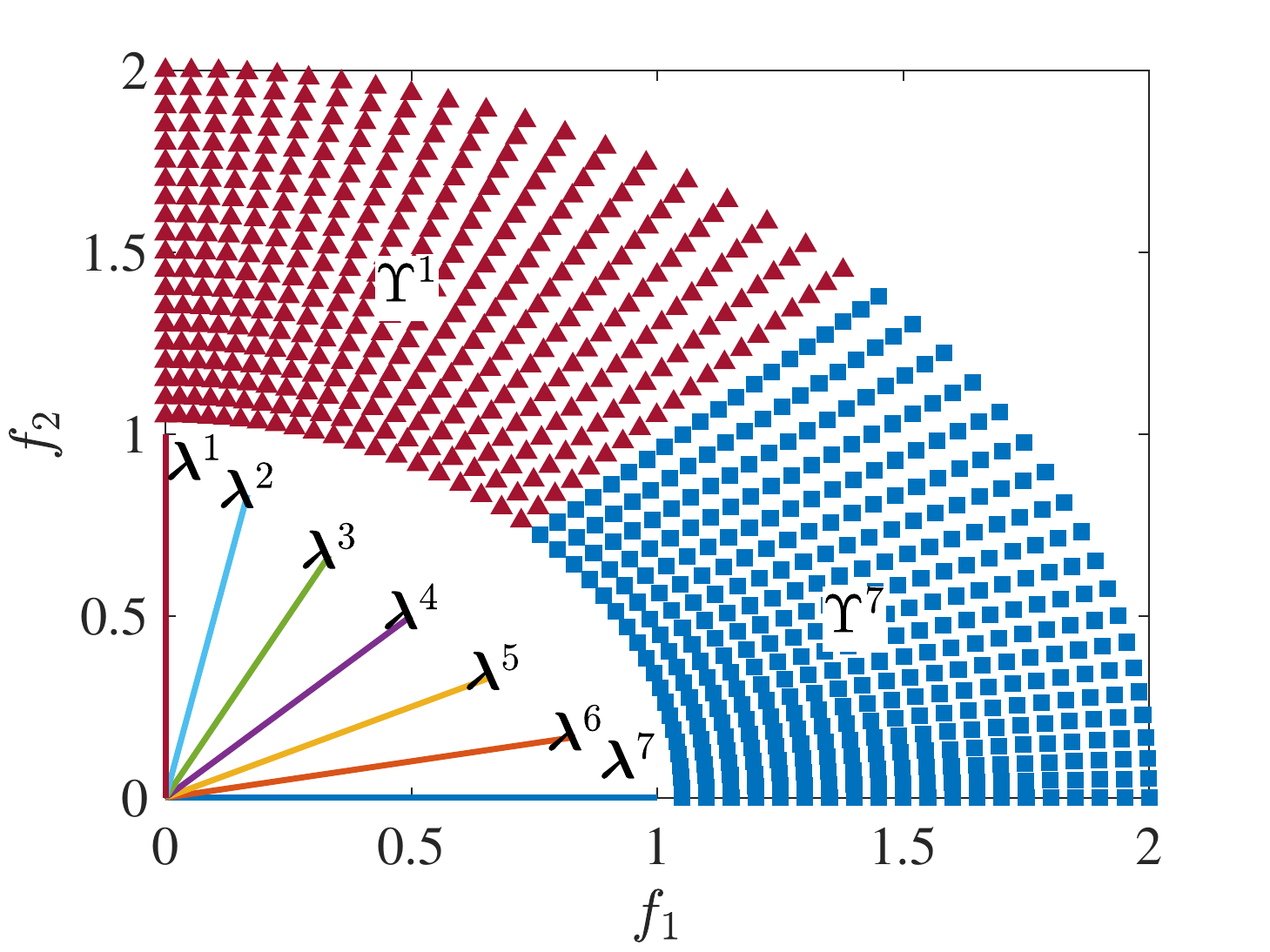}\label{subfig:p1_preference}
	}
	\subfigure[$L_2$]{
	    \includegraphics[width=0.29\textwidth]{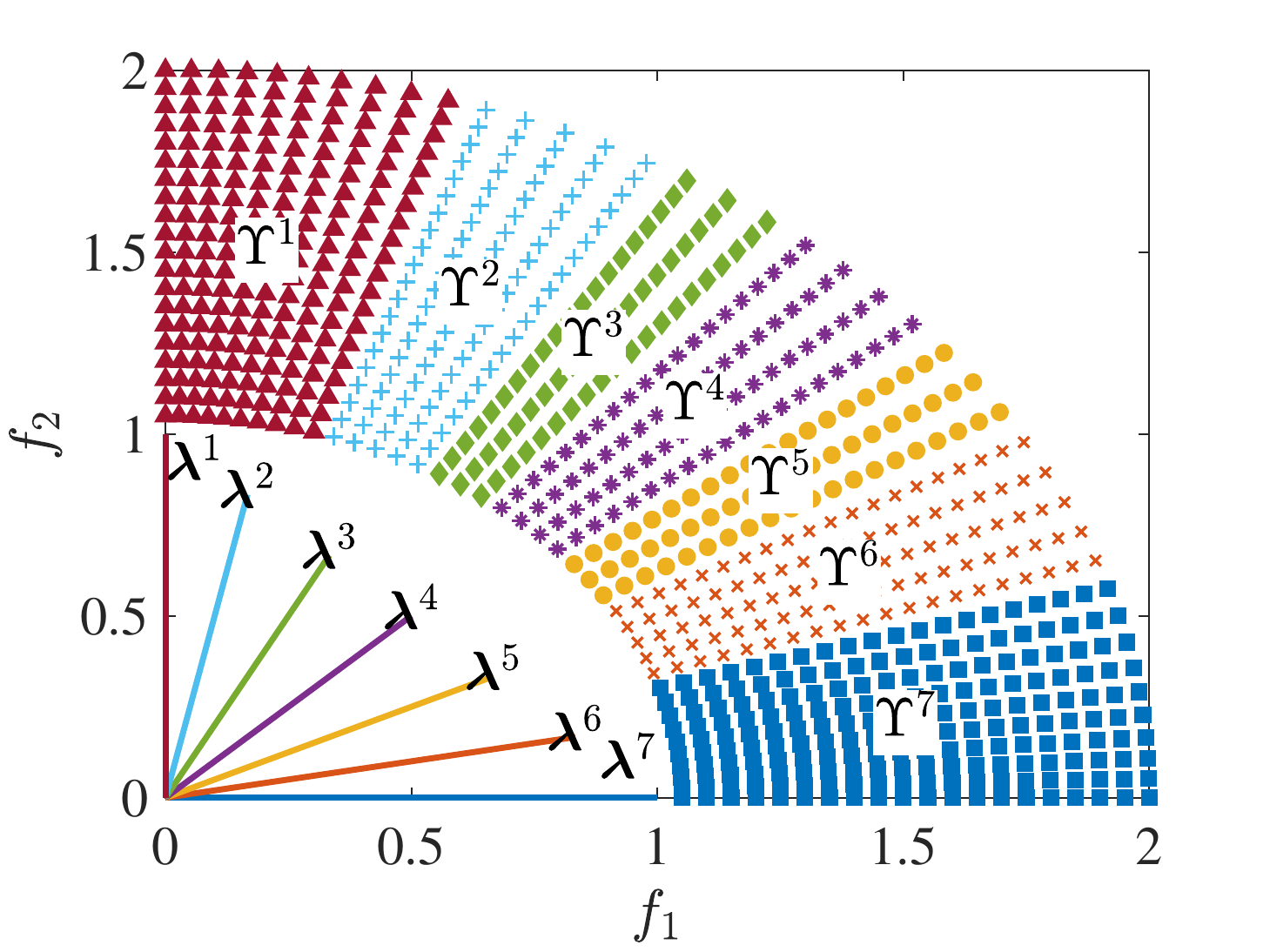}\label{subfig:p2_preference}
	}
	\subfigure[$L_{\infty}$]{
	    \includegraphics[width=0.29\textwidth]{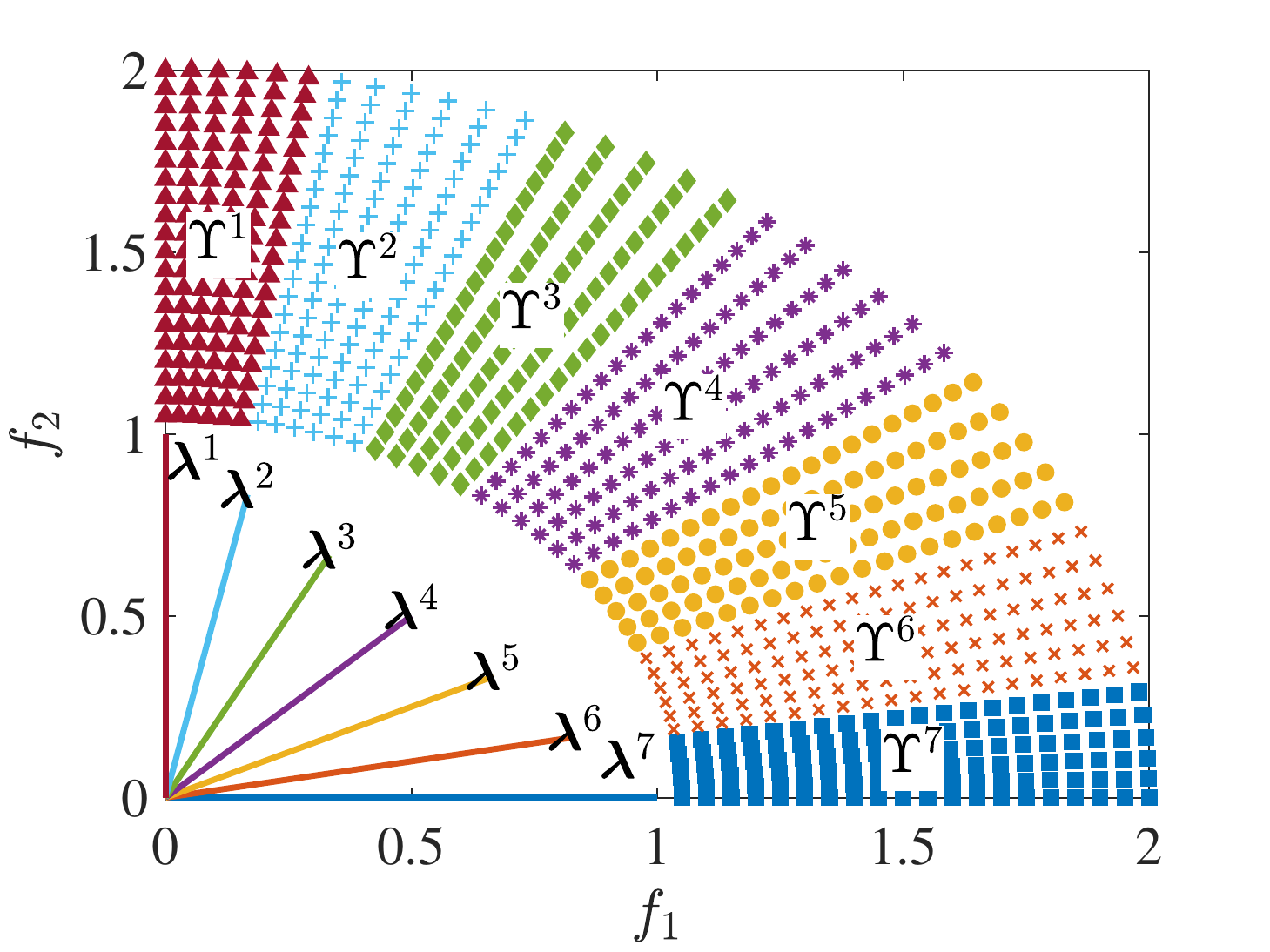}\label{subfig:pinf_preference}
	}
	\caption{Preference regions of $L_p$-based subproblems with $\mathbf{z}^*=(0,0)^{\intercal}$ and $\{\mathbf{f}=(f_1,f_2)^{\intercal}|1\leq \lVert \mathbf{f} \rVert_2 \leq 2\}$.}
	\label{fig:preference_region_NS}
\end{figure*}

\begin{figure*}[t]
	\centering
	\subfigure[G$L_1$]{
	    \includegraphics[width=0.29\textwidth]{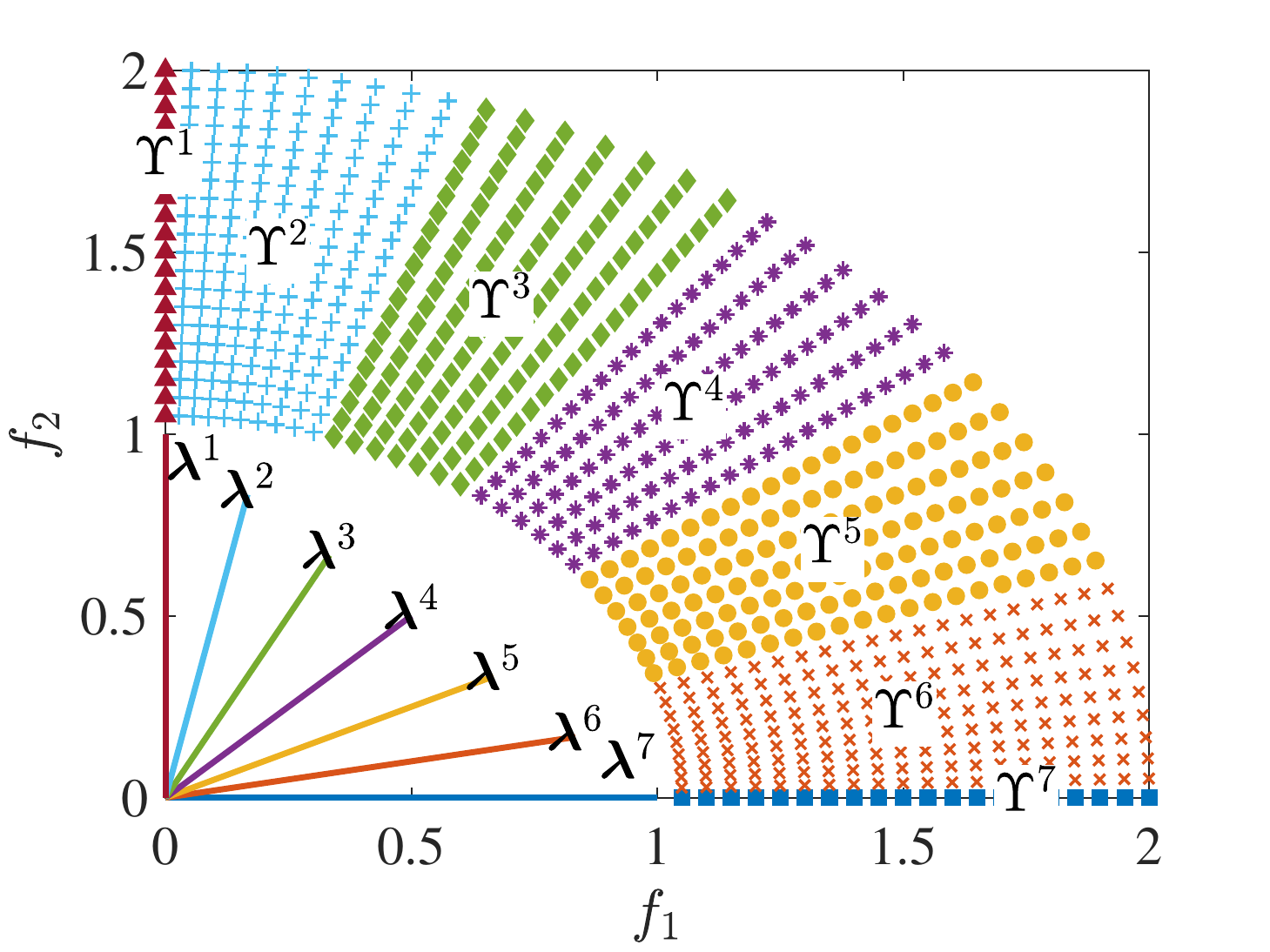}\label{subfig:GNSp1_preference}
	}
	\subfigure[G$L_2$]{
	    \includegraphics[width=0.29\textwidth]{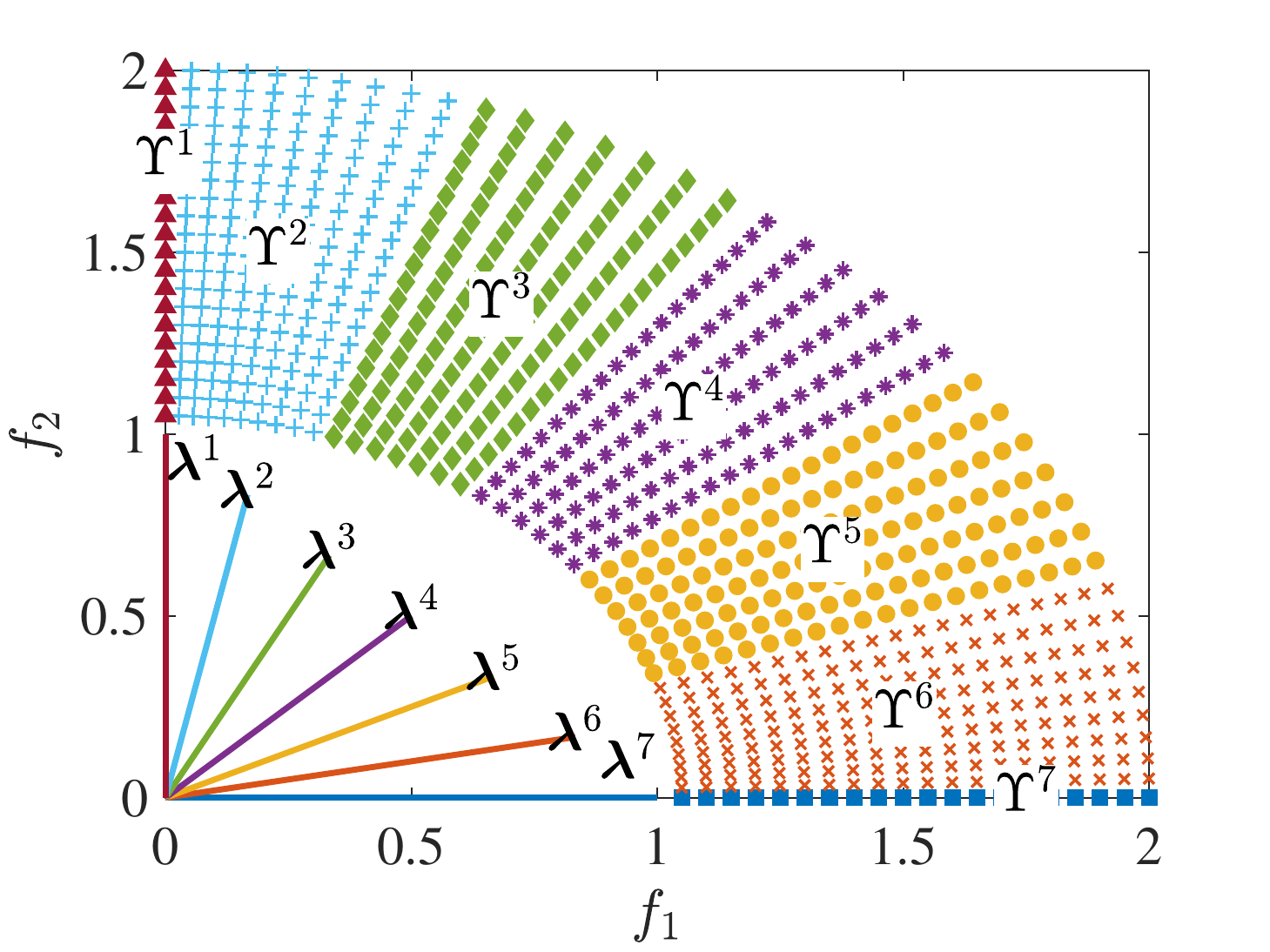}\label{subfig:GNSp2_preference}
	}
	\subfigure[G$L_{\infty}$]{
	    \includegraphics[width=0.29\textwidth]{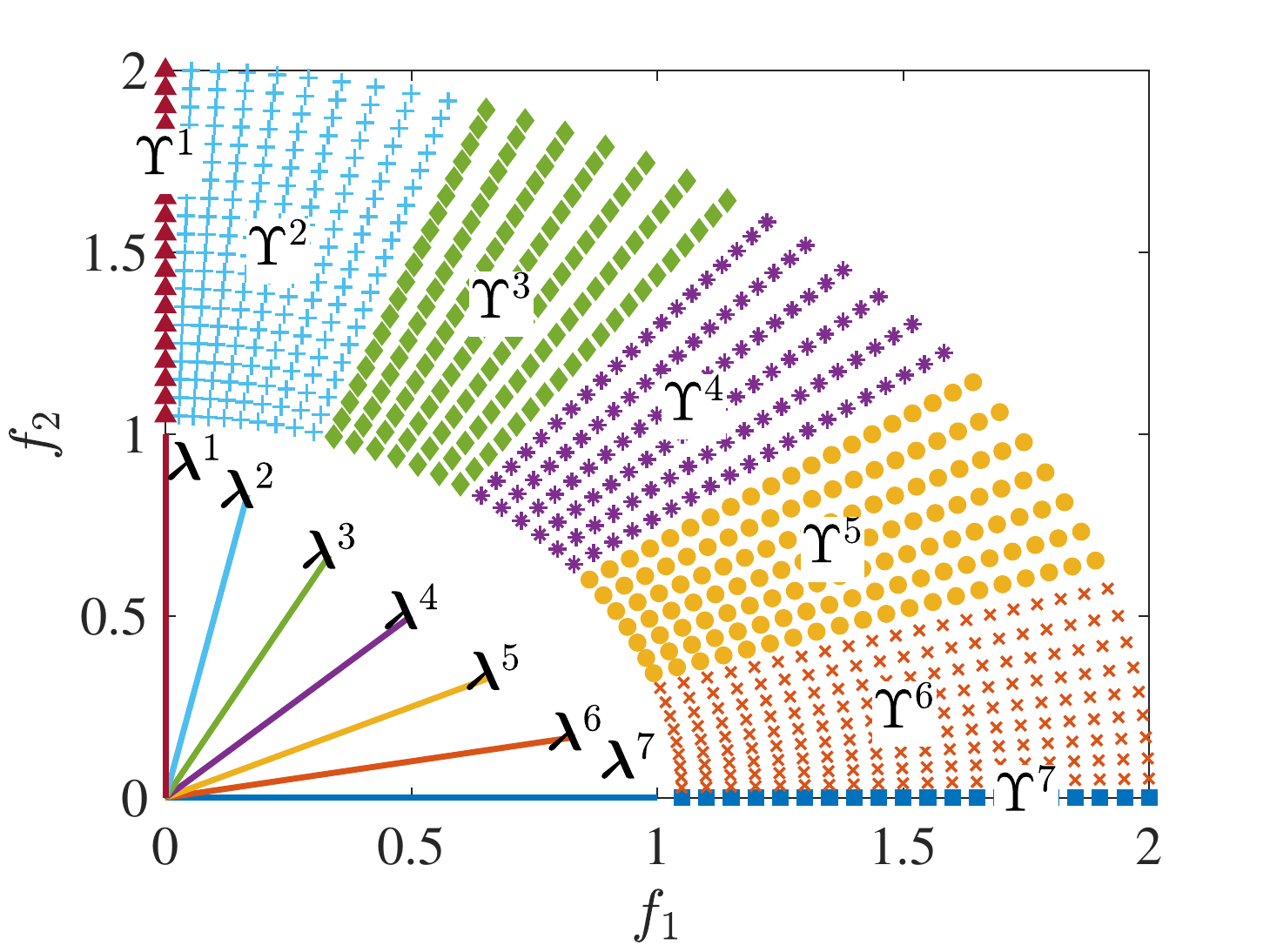}\label{subfig:GNSpinf_preference}
	}
	\caption{Preference regions of G$L_p$-based subproblems with $\mathbf{z}^*=(0,0)^{\intercal}$ and $\{\mathbf{f}=(f_1,f_2)^{\intercal}|1\leq \lVert \mathbf{f} \rVert_2 \leq 2\}$.}
	\label{fig:preference_region_GNS}
\end{figure*}

\section{Generalized $L_p$ Scalarization}
\subsection{Motivation}\label{moti}
According to ~\cite{ma2017tchebycheff}, the preference region of the $j$-th subproblem can be described as
\begin{equation}
    \Upsilon^j = \left\{ \mathbf{f}(\mathbf{x}) | \mathbf{x}\in \Omega, \mathop{\arg\min}_{k\in \{1,\ldots,N\}}\{g(\mathbf{x}|\mathbf{w}^k, \mathbf{z}^*)\}=j \right\}.
\end{equation}
The preference regions of the $L_p$-based subproblems are illustrated in \figurename~\ref{fig:preference_region_NS}.
When $p=1$, only the $1$-st and $7$-th subproblems have the preference regions and the other subproblems show no preference region. When $p=2$, all the subproblems have preference regions, but some subproblems' direction vectors (\eg, $\boldsymbol{\lambda}^2,\boldsymbol{\lambda}^3,\boldsymbol{\lambda}^5,\boldsymbol{\lambda}^6$) do not pass through their corresponding regions.


\begin{myDef}
A subproblem is called \textbf{\em boundary subproblem} if $\mathbf{w}$ has at least one minimal entry. A subproblem is called \textbf{\em extreme boundary subproblem} if $\mathbf{w}$ has $(m-1)$ minimal entries.
\end{myDef}

\begin{myThe}\label{the:p1_preference}
For $L_1$-based subproblems, only the extreme boundary subproblems have preference regions.
\end{myThe}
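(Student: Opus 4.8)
The plan is to exploit that the $L_1$ (weighted-sum) scalarization is \emph{linear} in the objective vector. Fix any $\mathbf{x}\in\Omega$ and put $a_i := f_i(\mathbf{x})-z_i^*\ge 0$ (recall $\mathbf{z}^*=\mathbf{z}^{ide}$). Then for every $k$, $g^{ws}(\mathbf{x}|\mathbf{w}^k,\mathbf{z}^*)=\sum_{i=1}^m w_i^k a_i$ is a convex combination of $a_1,\dots,a_m$, so $g^{ws}(\mathbf{x}|\mathbf{w}^k,\mathbf{z}^*)\ge \min_{1\le i\le m} a_i$, with equality \emph{iff} $w_i^k=0$ for every $i$ with $a_i>\min_\ell a_\ell$, that is, iff $\mathrm{supp}(\mathbf{w}^k)\subseteq I^\star:=\arg\min_{1\le\ell\le m} a_\ell$. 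Because the Das--Dennis weight set contains the $m$ unit vectors $\mathbf{e}_1,\dots,\mathbf{e}_m$ --- which are exactly the extreme boundary subproblems --- we moreover get $\min_k g^{ws}(\mathbf{x}|\mathbf{w}^k,\mathbf{z}^*)=\min_i a_i$, attained precisely at the weights supported on $I^\star$.

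Next I would settle the ``only'' direction. Suppose the $j$-th subproblem is \emph{not} an extreme boundary subproblem; then $\mathbf{w}^j$ has at most $m-2$ minimal (zero) entries, hence at least two positive entries $w_{i_1}^j,w_{i_2}^j>0$ with $i_1\neq i_2$. If $\mathbf{f}(\mathbf{x})\in\Upsilon^j$ for some $\mathbf{x}$, then $j$ is the unique minimizer over $k$ of $g^{ws}(\mathbf{x}|\mathbf{w}^k,\mathbf{z}^*)$, so in particular $g^{ws}(\mathbf{x}|\mathbf{w}^j,\mathbf{z}^*)=\min_i a_i$; by the equality condition above, $\{i_1,i_2\}\subseteq\mathrm{supp}(\mathbf{w}^j)\subseteq I^\star$, hence $|I^\star|\ge 2$. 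But then $\mathbf{e}_{i_1}\neq\mathbf{w}^j$ also attains the minimum, since $g^{ws}(\mathbf{x}|\mathbf{e}_{i_1},\mathbf{z}^*)=a_{i_1}=\min_i a_i$, so the set of minimizers is not the singleton $\{j\}$ --- contradicting $\mathbf{f}(\mathbf{x})\in\Upsilon^j$. Therefore $\Upsilon^j=\varnothing$ whenever the $j$-th subproblem is not an extreme boundary subproblem, which is the claim.

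For completeness I would also check the converse --- that each extreme boundary subproblem genuinely has a non-empty preference region. For the weight $\mathbf{e}_i$, take a feasible $\mathbf{x}$ at which the shifted objective $a_i$ is strictly the smallest component (such $\mathbf{x}$ exists as long as the $PF$ is not a single point, e.g.\ near a minimizer of $f_i$). Then any $\mathbf{w}^k\neq\mathbf{e}_i$ has some $w_\ell^k>0$ with $\ell\neq i$, so $g^{ws}(\mathbf{x}|\mathbf{w}^k,\mathbf{z}^*)=\sum_\ell w_\ell^k a_\ell>\sum_\ell w_\ell^k a_i=a_i=g^{ws}(\mathbf{x}|\mathbf{e}_i,\mathbf{z}^*)$, so $\mathbf{e}_i$ is the unique minimizer and $\mathbf{f}(\mathbf{x})\in\Upsilon^{\mathbf{e}_i}$.

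The point I expect to need the most care is the tie bookkeeping around ``minimal entry''/$\arg\min$: the whole argument rests on the single fact that equality in the convex-combination bound forces $\mathbf{w}^j$ to lie on the face $\mathrm{conv}\{\mathbf{e}_i:i\in I^\star\}$, so a weight with two positive components can be optimal only tied with the vertices of that face, and therefore is never the unique $\arg\min$. If the implementation perturbs the zero weights to a small $\epsilon>0$ (so an extreme boundary weight is a permutation of $(1-(m-1)\epsilon,\epsilon,\dots,\epsilon)$ rather than a true unit vector), the same chain of (in)equalities goes through after reading ``$w_i^k=0$'' as ``$w_i^k=\epsilon$'' and ``$\mathbf{e}_i$'' as the corresponding $\epsilon$-perturbed vertex; I would state the theorem for the clean $\epsilon=0$ case and remark that this adaptation is routine.
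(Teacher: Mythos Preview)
Your argument is correct and essentially the same as the paper's: both exploit that $g^{ws}$ is linear in $\mathbf{w}$, so for fixed $\mathbf{f}$ the minimum over the simplex is attained at a vertex---the paper phrases this via linear programming (the optimum is a basic feasible solution, hence a unit vector), while you phrase it as the convex-combination bound $\sum_i w_i a_i\ge\min_i a_i$. Your treatment is in fact more careful than the paper's, since you handle ties explicitly to show a non-extreme $\mathbf{w}^j$ can never be the \emph{unique} $\arg\min$ and you also verify the converse direction that each extreme boundary subproblem has non-empty preference region, neither of which the paper's proof addresses; but the core idea is identical.
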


\begin{proof}
Without loss of generality, we assume $z_i^*=0$, $f_i\in\mathbb{R}_{\geq0}$ and $w_i\in\mathbb{R}_{\geq0}$ for $i=1,\ldots,m$. The subproblem of a given preference objective vector $\mathbf{f}$ can be determined by solving the following linear programming problem
\begin{equation}
    \begin{aligned}
        & \mathop{\mbox{minimize}}\limits_{\mathbf{w}} \quad \mathbf{w}^{\intercal} \mathbf{f}, \\
        & \mbox{subject to} \quad \left\{
        \begin{aligned}
            & A\mathbf{w} = b, \\
            & w_i\geq 0 \mbox{ for } i=1,\ldots,m,
        \end{aligned}\right.
    \end{aligned}
\end{equation}
where $A=[1 \cdots 1]^{1 \times m}$ and $b=1$. The optimal solution to this problem is one of the basic solutions. Denote $B_k$ as $k$-th column of $A$. Since $B_k^{-1}b=1$ for $k=1,\ldots,m$, then the basic solution $k$ to this problem is
\begin{equation}\label{eqn:WS_GR_bas}
    w_i^{{bas}_k} = \left\{
    \begin{aligned}
        & 1, i = k, \\
        & 0, i\neq k.
    \end{aligned}\right.
\end{equation}
Eq. \eqref{eqn:WS_GR_bas} represents that the subproblem of a given preference objective vector $\mathbf{f}$ is always one of the extreme boundary subproblems.
\end{proof}


\begin{myThe}\label{the:p>1_preference}
The direction vectors of $L_p$-based ($1\leq p<\infty$) subproblems are not guaranteed to pass through the corresponding preference regions except direction vector $(m,\ldots,m)^{\intercal}$. The direction vectors of $L_{\infty}$-based subproblems all pass through the corresponding preference regions.
\end{myThe}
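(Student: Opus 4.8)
The plan is to pull everything back to a single ray. Without loss of generality take $\mathbf{z}^*=\mathbf{0}$, and for the subproblem with weight $\mathbf{w}^j$ look at the points $\mathbf{f}=t\boldsymbol{\lambda}^j=t(1/w_1^j,\ldots,1/w_m^j)^{\intercal}$, $t>0$, lying on its direction vector. Substituting into Eq.~\eqref{eqn:p-norm} gives $g^{lp}(t\boldsymbol{\lambda}^j\,|\,\mathbf{w}^k,\mathbf{0})=t\big(\sum_{i=1}^m (w_i^k/w_i^j)^p\big)^{1/p}$, so the factor $t$ cancels and the minimiser over $k$ is the same for every $t$. Hence (assuming the ray meets the objective space, and fixing some tie-break in the $\arg\min$) $\boldsymbol{\lambda}^j$ passes through $\Upsilon^j$ if and only if $\phi_j(\mathbf{w}^k):=\sum_{i=1}^m (w_i^k/w_i^j)^p\ge m$ for all $k$, the value at $k=j$ being exactly $m$. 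So the theorem reduces to the statement: the minimum of $\phi_j$ over the weight simplex $\{\mathbf{w}\ge\mathbf 0,\ \sum_i w_i=1\}$ equals $m$ precisely when $\mathbf{w}^j=(1/m,\ldots,1/m)^{\intercal}$.

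For the direction vector $(m,\ldots,m)^{\intercal}$, i.e. $\mathbf{w}^j$ the centre, $\phi_j(\mathbf{w})=m^p\sum_i w_i^p\ge m^p\cdot m\big(\tfrac1m\sum_i w_i\big)^p=m$ by Jensen's inequality (convexity of $t\mapsto t^p$, $p\ge 1$), with equality only at the centre when $p>1$. Thus this direction vector always meets its preference region, for any weight set containing it. (For $p=1$ the conclusion instead follows from Theorem~\ref{the:p1_preference}, so the nontrivial ``except'' claim concerns $p>1$.) Conversely, I would show any non-centre weight can be beaten along its own ray. The function $\phi_j$ is smooth and strictly convex (for $p>1$) on the relative interior of the simplex; its gradient at $\mathbf{w}^j$ is $(p/w_1^j,\ldots,p/w_m^j)^{\intercal}$, whose projection onto the tangent hyperplane $\{\mathbf{v}:\sum_i v_i=0\}$ vanishes iff all $w_i^j$ are equal. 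Hence, when $\boldsymbol{\lambda}^j\neq(m,\ldots,m)^{\intercal}$ (so $\mathbf{w}^j$ is interior with unequal entries), a small step along the negative projected gradient stays in the simplex and strictly lowers $\phi_j$ below $m$; equivalently, solving $\min\phi_j$ via KKT yields the unique interior minimiser $\mathbf{w}^{\star}$ with $w_i^{\star}\propto (w_i^j)^{p/(p-1)}$, which differs from $\mathbf{w}^j$ and satisfies $\phi_j(\mathbf{w}^{\star})<m$. Any weight set containing both $\mathbf{w}^j$ and $\mathbf{w}^{\star}$ is then a configuration in which subproblem $j$ is never the best along $\boldsymbol{\lambda}^j$, so $\boldsymbol{\lambda}^j$ does not pass through $\Upsilon^j$ — giving the ``not guaranteed'' part.

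For the $L_{\infty}$ statement the same substitution gives $g^{tch}(t\boldsymbol{\lambda}^j\,|\,\mathbf{w}^k,\mathbf{0})=t\max_i(w_i^k/w_i^j)$, which equals $t$ at $k=j$; and $\max_i(w_i^k/w_i^j)\ge 1$ always, since $w_i^k<w_i^j$ for every $i$ would force $1=\sum_i w_i^k<\sum_i w_i^j=1$, with the inequality strict whenever $\mathbf{w}^k\neq\mathbf{w}^j$. So $j$ is the unique minimiser along the ray and $\boldsymbol{\lambda}^j$ meets $\Upsilon^j$ for every weight set — this is the familiar optimality of the direction/$PF$ intersection for TCH subproblems~\cite{qi2014moea,ma2017tchebycheff}.

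I expect the main obstacle to be the converse half of the $L_p$ part: rigorously showing the non-centre weight $\mathbf{w}^{\star}$ is strictly dominated along the ray. This means carefully doing the constrained optimisation of $\phi_j$ over the simplex (KKT stationarity, strict convexity for $p>1$, and the degenerate linear case $p=1$), checking that the stationary point lies in the interior rather than on the boundary, excluding direction vectors with infinite entries, and being explicit about the tie-breaking used in the $\arg\min$. The centre-direction case and the $L_{\infty}$ case are essentially routine once the ray reduction in the first paragraph is in place.
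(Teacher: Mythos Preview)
Your proposal is correct. Both you and the paper reduce the question to the constrained minimisation of $g^{lp}$ over the weight simplex for a fixed objective vector, and both locate the interior stationary point by first-order conditions --- your $w_i^{\star}\propto (w_i^j)^{p/(p-1)}$ is exactly the paper's $w_i=(1/(\alpha f_i))^{p/(p-1)}$ specialised to $\mathbf{f}=t\boldsymbol{\lambda}^j$. The paper, however, works with \emph{general} $\mathbf{f}$: it substitutes $w_m=1-\sum_{i<m}w_i$, differentiates the resulting $g'$ directly, solves $\sigma_2=0$, verifies the solution is a global minimum by observing that $g'$ is a norm composed with an affine map, and only then reads off the preference ``line'' $1/w_i=(\alpha f_i)^{p/(p-1)}$ and lets $p\to\infty$ in this formula to obtain the $L_\infty$ claim. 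Your route is more targeted: the ray reduction factors out $t$ at the outset so you never need general $\mathbf{f}$; Jensen's inequality settles the centre direction $(m,\ldots,m)^{\intercal}$ without any calculus; and the $L_\infty$ case is dispatched by the one-line pigeonhole argument $\max_i w_i^k/w_i^j\ge 1$ from $\sum_i w_i^k=\sum_i w_i^j$, rather than a limit. The paper's approach buys a description of the full preference-region structure for arbitrary $\mathbf{f}$, which it reuses when designing $h(\mathbf{w})$ for the G$L_p$ scalarization; your approach is shorter and more self-contained for proving the theorem as stated.
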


\begin{proof}
We assume $z_i^*=0$, $f_i\in\mathbb{R}_{\geq0}$ and $w_i\in\mathbb{R}_{\geq0}$ for $i=1,\ldots,m$. $\sum_{i=1}^m w_i=1$ is substituted into Eq. \eqref{eqn:p-norm} and then we have
\begin{equation}\label{eqn:p-norm_preference}
    g^{\prime}(\mathbf{w}|\mathbf{f},\mathbf{z}^*) \!=\! \left(\sum_{i=1}^{m-1} (w_i f_i)^p \!+\! \left(1\!-\!\sum_{i=1}^{m-1} w_i\right)^p \!f_m^p \right)^{\frac{1}{p}}\!.
\end{equation}
The first-order partial derivative of Eq. \eqref{eqn:p-norm_preference} with respect to $w_k$ is
\begin{equation}\label{eqn:p-norm_preference_d1}
    \frac{\partial g^{\prime}(\mathbf{w}|\mathbf{f},\mathbf{z}^*)}{\partial w_k} = \sigma_1 \cdot \sigma_2,
\end{equation}
where
\begin{equation}
\begin{aligned}
    & \sigma_1 = \frac{1}{p}\left(\sum_{i=1}^{m-1} (w_i f_i)^p + \left(1-\sum_{i=1}^{m-1} w_i\right)^p f_m^p \right)^{\frac{1}{p}-1}, \\
    & \sigma_2 = p\left(w_k^{p-1} f_k^p - \left(1-\sum_{i=1}^{m-1}w_i\right)^{p-1} f_m^p\right).
\end{aligned}
\end{equation}
$\frac{\partial g^\prime(\mathbf{w}|\mathbf{f},\mathbf{z}^*)}{\partial w_k}=0$ if and only if $\sigma_1=0$ or $\sigma_2=0$. First, let $\sigma_1=0$, we can get $w_i f_i=0$ for $i=1,\ldots,m$. Since $w_i \geq 0$ and $f_i \geq 0$, $w_i$ or $f_i$ must be 0. $\sigma_2$ in this case must be 0. Secondly, let $\sigma_2=0$, we can obtain
\begin{equation}\label{eqn:p-norm_preference_d1_0_part}
    w_k^{p-1}f_k^p = \left(1-\sum_{i=1}^{m-1}w_i\right)^{p-1} f_m^p.
\end{equation}
According to Eq. \eqref{eqn:p-norm_preference_d1_0_part}, the solution satisfies
\begin{equation}\label{eqn:p-norm_preference_line}
    \left(\frac{w_k}{w_i}\right)^{p-1} = \left(\frac{f_i}{f_k}\right)^p, ~~i,k\in\{1,\ldots,m\}.
\end{equation}
Then the solution can be written as
\begin{equation}\label{eqn:p-norm_preference_sol}
    w_i = \left(\frac{1}{\alpha f_i}\right)^{\frac{p}{p-1}},\mbox{ for }i=1,\ldots,m,
\end{equation}
where $\alpha\geq 0$ is a constant that ensure $\sum_{i=1}^m w_i = 1$. There exists $A\in\mathbb{R}^{m\times m}$ and $b\in\mathbb{R}^{m}$ which are
\begin{equation}
    \begin{aligned}
        & A = \left[
        \begin{array}{ccccc}
        1     &0      &\cdots&\cdots &0\\
        0       &1 & &       &0\\
        \vdots  &       &\ddots & &\vdots\\
        0       & & &1      &0\\
        -1       &\cdots&\cdots &-1      &0
        \end{array}\right], \\
        & b = (0,\ldots,0,1)^{\intercal},
    \end{aligned}
\end{equation}
such that
\begin{equation}
    g^{lp}(A\mathbf{w}+b|\mathbf{f},\mathbf{z}^*) = g^{\prime}(\mathbf{w}|\mathbf{f},\mathbf{z}^*).
\end{equation}
Since $g^{lp}(\mathbf{w}|\mathbf{f},\mathbf{z}^*)$ is a norm as well as a convex function, Eq. \eqref{eqn:p-norm_preference} is a convex function. Then, Eq. \eqref{eqn:p-norm_preference_sol} is a global minimal solution. 


Eq. \eqref{eqn:p-norm_preference_sol} can be rewritten as $\frac{1}{w_i} = (\alpha f_i)^{\frac{p}{p-1}}$ for $i=1,\ldots,m$. If we have infinitely sampled weight vectors, each subproblem's preference region is a line. When $p\rightarrow\infty$, $\frac{1}{w_i}=\alpha f_i$. The corresponding subproblem's direction vector passes through its preference region. When $p$ takes a value from $(1,\infty)$, only direction vector $(m,\ldots,m)^\intercal$ passes through its preference region while other direction vectors cannot pass through their corresponding preference regions.
\end{proof}

Theorems \ref{the:p1_preference} and \ref{the:p>1_preference} demonstrate that the $L_p$ scalarization with $p\in[1,\infty)$ can cause the performance deterioration of MOEA/D-GR. As shown in \figurename~\ref{fig:pop}, many objective vectors obtained by MOEA/D-GR ($L_1$) are far away from the $PF$. Theorem \ref{the:p1_preference} indicates that MOEA/D-GR ($L_1$) always selects one of the boundary subproblems for a new solution. As a result, the other subproblems cannot be selected for updating. \figurename~\ref{fig:pop} also shows that MOEA/D-GR ($L_{\infty}$) has a better population uniformity than MOEA/D-GR ($L_2$). According to Theorem \ref{the:p>1_preference}, the $L_p$ scalarization with $p\in[1,\infty)$ causes mismatches in MOEA/D-GR and makes MOEA/D-GR fail to achieve a good population uniformity. 

The significance of the above analysis is by no means limited to explaining the mismatches in MOEA/D-GR. As argued in~\cite{hao2017improved}, some mismatches are incurred when MOEA/D-GR ($L_{\infty}$) adopts $\mathbf{z}^{uto}$ instead of $\mathbf{z}^{ide}$ as the reference vector. The reasons can be inferred using Theorem \ref{the:p>1_preference} as well. If the boundary subproblems exceed the feasible objective space, boundary subproblems are never selected for updating in MOEA/D-GR. Moreover, other newly proposed scalarization methods~\cite{jiang2017scalarizing} can also use this analysis of the preference regions to validate if they can deal with mismatches well.


\subsection{Methodology}
The idea of the G$L_p$ scalarization is to modify the $L_p$ scalarization such that any direction vector of a subproblem can pass through its corresponding preference region. The G$L_p$ scalarization is formulated as
\begin{equation}\label{eqn:Mp}
    g^{glp}(\mathbf{x}|\mathbf{w},\mathbf{z}^*) \!=\! \left(\sum_{i=1}^m (w_i(f_i(\mathbf{x})-z^*_i))^p\right)^{\frac{1}{p}} \cdot h(\mathbf{w}),
\end{equation}
where $h(\mathbf{w})$ enables the G$L_p$ scalarization to satisfy the requirement. Note that $h(\mathbf{w})$ only changes the scale of $g^{lp}(\mathbf{x}|\mathbf{w},\mathbf{z}^*)$ among different $\mathbf{w}$. $h(\mathbf{w})$ is a constant for a particular $\mathbf{w}$, and thus the contour shape remains the same. Moreover, the computational effort is low since $h(\mathbf{w})$ can be pre-calculated for each subproblem.

We assume $z_i^*=0$, $f_i\in\mathbb{R}_{\geq0}$ and $w_i\in\mathbb{R}_{\geq0}$ for $i=1,\ldots,m$, the first-order partial derivative of $g^{glp}(\mathbf{w}|\mathbf{f},\mathbf{z}^*)$ with respect to $w_k$ can be calculated as
\begin{equation}\label{eqn:Mp_d1}
\begin{aligned}
    \frac{\partial g^{glp}(\mathbf{w}|\mathbf{f},\mathbf{z}^*)}{\partial w_k} \!=\! & \left(\sum_{i=1}^m (w_i f_i)^p\right)^{\frac{1}{p}-1}w_k^{p-1} \! f_k^p h(\mathbf{w}) + \\
    & \left(\sum_{i=1}^m (w_i f_i)^p\right)^{\frac{1}{p}} \frac{\partial h(\mathbf{w})}{\partial w_k}.
\end{aligned}
\end{equation}
Let $w_i=\frac{1}{\alpha f_i} \mbox{ for } i=1,\ldots,m$ such that $\frac{\partial g^{glp}(\mathbf{w}|\mathbf{f},\mathbf{z}^*)}{\partial w_k}=0$, we can obtain
\begin{equation}\label{eqn:Mp_d1_dir}
    \frac{\partial h(\mathbf{w})}{\partial w_k}=-\frac{1}{mw_k}h(\mathbf{w}).
\end{equation}
According to Eq. \eqref{eqn:Mp_d1_dir}, $h(\mathbf{w})$ can be
\begin{equation}\label{eqn:hw}
    h(\mathbf{w}) = \exp{\left(-\frac{1}{m} \ln{\prod_{i=1}^m w_i}\right)} = \left(\prod_{i=1}^m w_i\right)^{-\frac{1}{m}}.
\end{equation}

\begin{myThe}
The direction vectors of G$L_p$-based ($p\geq 1$) subproblems all pass through the corresponding preference regions.
\end{myThe}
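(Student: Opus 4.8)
The plan is to reuse the setup from the proof of Theorem~\ref{the:p>1_preference} together with the modified scalarization \eqref{eqn:Mp} and the weight function $h$ from \eqref{eqn:hw}. As before I assume $z_i^*=0$ and $f_i,w_i\in\mathbb{R}_{\geq 0}$, and I take the weight vector of the subproblem under consideration to have all entries strictly positive (the boundary cases following by continuity, and this being in any case the only regime in which $h$ is finite). Fix an arbitrary subproblem, say the $j$-th, with weight vector $\mathbf{w}$ and direction vector $\boldsymbol{\lambda}=(1/w_1,\ldots,1/w_m)^{\intercal}$. For an arbitrary $c>0$ put $\mathbf{f}=c\,\boldsymbol{\lambda}$, equivalently $w_i=1/(\alpha f_i)$ with $\alpha=1/c$. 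Then the whole theorem reduces to the single claim that $\mathbf{w}$ is a global minimizer of $\mathbf{w}'\mapsto g^{glp}(\mathbf{w}'\,|\,\mathbf{f},\mathbf{z}^*)$ over the unit simplex $\{\mathbf{w}'\geq 0:\sum_i w'_i=1\}$: by the definition of the preference region $\Upsilon^j$ this is precisely the statement $\mathbf{f}\in\Upsilon^j$, and since $\mathbf{f}$ ranges over the whole ray spanned by $\boldsymbol{\lambda}$, it shows that $\boldsymbol{\lambda}$ passes through $\Upsilon^j$.

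The first-order half of that claim is already contained in \eqref{eqn:Mp_d1}--\eqref{eqn:hw}: $h$ was engineered exactly so that $\partial g^{glp}(\mathbf{w}'\,|\,\mathbf{f},\mathbf{z}^*)/\partial w'_k$ vanishes for every $k$ at $w'_i=1/(\alpha f_i)$, i.e. at $\mathbf{w}'=\mathbf{w}$. So $\mathbf{w}$ is a critical point, and the only thing left is to upgrade ``critical point'' to ``global minimizer''.

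That upgrade is the main obstacle, and it is the step where the argument used for the plain $L_p$ scalarization does not carry over: $g^{glp}(\mathbf{w}'\,|\,\mathbf{f},\mathbf{z}^*)$ is a product of the convex function $\mathbf{w}'\mapsto(\sum_i(w'_if_i)^p)^{1/p}$ and the convex function $h(\mathbf{w}')=(\prod_i w'_i)^{-1/m}$, and a product of convex functions is in general not convex. My plan to get past this is to change to logarithmic coordinates $s_i=\log w'_i$. With $\ell_i=\log f_i$ one gets $\log g^{glp}=\frac{1}{p}\log\sum_i e^{p(s_i+\ell_i)}-\frac{1}{m}\sum_i s_i$, which is a positive multiple of a log-sum-exp evaluated at an affine function of $\mathbf{s}$ plus a linear term, hence \emph{jointly convex} in $\mathbf{s}$ for every $p\geq 1$; this also disposes of $p=1$, where the first factor degenerates to the affine map $\sum_i w'_if_i$. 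Because $g^{glp}(\,\cdot\,|\,\mathbf{f},\mathbf{z}^*)$ is invariant under the rescaling $\mathbf{w}'\mapsto\kappa\mathbf{w}'$ --- the factor $\kappa$ produced by the first factor cancels the $\kappa^{-1}$ produced by $h$ --- minimizing over the non-convex constraint surface $\{\sum_i e^{s_i}=1\}$ is equivalent to minimizing the convex function $\log g^{glp}$ over all of $\mathbb{R}^m$, so the critical point already found is a global minimizer. Finally, since the Hessian of log-sum-exp is positive definite transverse to the all-ones direction, that minimizer is unique up to rescaling, so the minimizing weight on the simplex is exactly $\mathbf{w}$ and, in the infinitely-sampled limit, $\Upsilon^j$ is exactly the ray of $\boldsymbol{\lambda}$. (Short of uniqueness, one can also note directly that $g^{glp}(\,\cdot\,|\,\mathbf{f},\mathbf{z}^*)$ blows up near the boundary of the simplex because $h$ does, which already forces the minimum into the interior.) Assembling these pieces gives $\mathbf{f}=c\,\boldsymbol{\lambda}\in\Upsilon^j$ for every $c>0$, i.e. the direction vector of every G$L_p$-based subproblem passes through its preference region.
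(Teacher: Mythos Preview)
Your argument is correct and takes a genuinely different route from the paper's. The paper parametrizes $w_i=\frac{1}{\alpha f_i}+v_it$ along an arbitrary ray out of the candidate point, computes $\partial g'/\partial t=\sigma_3(\sigma_4-\sigma_5)$, and then spends most of the proof establishing $\sigma_4-\sigma_5\geq 0$ via the rearrangement inequality and Chebyshev's sum inequality; this shows $g^{glp}$ is nondecreasing along every such ray, hence unimodal with the global minimum at $w_i=1/(\alpha f_i)$. Your approach sidesteps all of that calculus: the substitution $s_i=\log w_i'$ turns $\log g^{glp}$ into $\frac{1}{p}\operatorname{LSE}(p(\mathbf{s}+\boldsymbol{\ell}))-\frac{1}{m}\sum_i s_i$, which is convex for every $p\geq 1$ by the standard convexity of log--sum--exp, and the scale invariance $g^{glp}(\kappa\mathbf{w}')=g^{glp}(\mathbf{w}')$ (the $\kappa$ from the norm cancels the $\kappa^{-1}$ from $h$) lets you drop the simplex constraint entirely. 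What you gain is a shorter, more conceptual proof that also yields uniqueness of the minimizer on the simplex for free via strict convexity transverse to the all-ones direction; what the paper's proof buys is that it stays in the original coordinates and uses only classical inequalities, without appealing to the log--sum--exp machinery.
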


\begin{proof}
We assume $z_i^*=0$, $f_i\in\mathbb{R}_{\geq0}$ and $w_i\in\mathbb{R}_{\geq0}$ for $i=1,\ldots,m$. Let $w_i=\frac{1}{\alpha f_i} + v_i t$ for $i=1,\ldots,m$ where $\mathbf{v}$ is any vector and $t\geq 0$, we have
\begin{equation}\label{eqn:Mp_v}
    g^{glp}(\mathbf{w}|\mathbf{f},\mathbf{z}^*) \!\!=\!\! g^{\prime}(t) \!\!=\!\! \left(\sum_{i=1}^m \left(\frac{1}{\alpha} \!+\! v_i f_i t\right)^p\right)^{\frac{1}{p}} \cdot\! h^{\prime}(t),
\end{equation}
where
\begin{equation}\label{eqn:hw_v}
    h^{\prime}(t) = \exp{\left(-\frac{1}{m} \sum_{i=1}^m \ln{(\frac{1}{\alpha f_i} + v_i t)}\right)}.
\end{equation}
The first-order derivative of Eq. \eqref{eqn:Mp_v} is
\begin{equation}\label{eqn:Mp_v_d1}
    \frac{\partial g^{\prime}(t)}{\partial t} = \sigma_3(\sigma_4 - \sigma_5),
\end{equation}
where
\begin{equation}
    \begin{aligned}
        & \sigma_3 = h^{\prime}(t) \left(\sum_{i=1}^m\left(\frac{1}{\alpha}+v_i f_i t\right)^p\right)^{\frac{1}{p}-1}, \\
        & \sigma_4 = \sum_{i=1}^m \left(\frac{1}{\alpha} + v_i f_i t\right)^{p-1}v_i f_i, \\
        & \sigma_5 = \frac{1}{m} \sum_{i=1}^m\left(\frac{1}{\alpha}+v_i f_i t\right)^p \sum_{i=1}^m \frac{v_i}{\frac{1}{\alpha f_i} + v_i t}.
    \end{aligned}
\end{equation}
First, we consider the sign of $\sigma_4-\sigma_5$. Since $v_i=\frac{1}{t}(w_i-\frac{1}{\alpha f_i})$, we have
\begin{equation}
    \begin{aligned}
        & \sigma_4 = \sum_{i=1}^m v_i w_i^{p-1} f_i^p, \\
        & \sigma_5 = \sum_{i=1}^m \left(\frac{w_i}{m} \sum_{j=1}^m \frac{v_j}{w_j}\right) w_i^{p-1} f_i^p, \\
        & v_i \!\!-\!\! \left(\!\frac{w_i}{m} \sum_{j=1}^m \frac{v_j}{w_j}\!\right)\!=\!\frac{w_i}{t}\left(\!\frac{1}{m}\sum_{j=1}^m \frac{1}{\alpha f_j w_j}\!-\!\frac{1}{\alpha f_i w_i}\!\right).
    \end{aligned}
\end{equation}
Then
\begin{equation}\label{eqn:s2_s3_trans}
\begin{aligned}
    \sigma_4\!-\!\sigma_5 \!=
    & \frac{1}{\alpha t} \sum_{i=1}^m(w_i f_i)^p \left(\frac{1}{m}\sum_{j=1}^m \frac{1}{f_j w_j} - \frac{1}{f_i w_i}\right) \\
    = & \frac{1}{\alpha t}\left( \frac{1}{m}\sum_{i=1}^m(w_i f_i)^p\sum_{i=1}^m\frac{1}{w_i f_i} - \right. \\
    & \left.\sum_{i=1}^m(w_i f_i)^p\frac{1}{w_i f_i} \right).
\end{aligned}
\end{equation}
We can assume $w_1 f_1\!\geq\!\ldots\!\geq\! w_m f_m$, $\tilde{w}_1 \tilde{f}_1\leq\ldots\leq \tilde{w}_m \tilde{f}_m$ and $w_i f_i=\tilde{w}_{m-i+1} \tilde{f}_{m-i+1}$ for $i=1,\ldots,m$. According to rearrangement inequality and Tchebycheff's sum inequality, the following inequality holds
\begin{equation}
\begin{aligned}
    \sigma_4-\sigma_5
    \geq & \frac{1}{\alpha t}\left( \frac{1}{m}\sum_{i=1}^m(w_i f_i)^p\sum_{i=1}^m\frac{1}{w_i f_i} - \right. \\
    & \left.\sum_{i=1}^m(w_i f_i)^p\frac{1}{\tilde{w_i}\tilde{f_i}} \right) \\
    \geq & \frac{1}{\alpha t}\left( \frac{1}{m}\sum_{i=1}^m(w_i f_i)^p\sum_{i=1}^m\frac{1}{w_i f_i} - \right. \\
    & \left.\frac{1}{m}\sum_{i=1}^m(w_i f_i)^p\sum_{i=1}^m\frac{1}{\tilde{w_i}\tilde{f_i}} \right)=0. \\
\end{aligned}
\end{equation}
Since $\frac{1}{\alpha}+v_i f_i t = w_i f_i \geq 0$, $\sigma_3\geq 0$. Therefore, Eq. \eqref{eqn:Mp_v_d1} $\geq 0$ which represents that $g^{glp}(\mathbf{w}|\mathbf{f},\mathbf{z}^*)$ is unimodal. Let $\sigma_3=0$, we can get $w_i f_i=\frac{1}{\alpha}+v_i f_i t=0$ for $i=1,\ldots,m$. In this case, $\sigma_4=0$ and $\sigma_5=0$.
$\sigma_4-\sigma_5=0$ if and only if $w_i f_i = w_j f_j,~i,j\in\{1,\ldots,m\}$. Then the global minimal solution is $w_i=\frac{1}{\alpha f_i} \mbox{ for } i=1,\ldots,m$.
\end{proof}

\begin{table*}[!ht]\small
  \centering
    \scalebox{1}{
    \setlength{\tabcolsep}{1mm}{
    \begin{tabular}{ccc|ccc|ccc|ccc}
    \toprule
    \multirow{3}[4]{*}{Problem} & \multirow{3}[4]{*}{$m$} & \multirow{3}[4]{*}{$I_H$} & \multicolumn{3}{c|}{$p=1$} & \multicolumn{3}{c|}{$p=2$} & \multicolumn{3}{c}{$p\rightarrow\infty$} \\
\cmidrule{4-12}          &       &       & \multirow{2}[2]{*}{MOEA/D} & MOEA/D & MOEA/D & \multirow{2}[2]{*}{MOEA/D} & MOEA/D & MOEA/D & \multirow{2}[2]{*}{MOEA/D} & MOEA/D & MOEA/D \\
          &       &       &       & -GR   & -GGR  &       & -GR   & -GGR  &       & -GR   & -GGR \\
    \midrule
    \multirow{2}[1]{*}{ZDT1} & \multirow{2}[1]{*}{2} & mean  & 0.8614(2)- & 0.0097(3)- & \cellcolor{gray} 0.8657(1) & \cellcolor{gray} 0.871(1)+ & 0.8398(3)- & 0.869(2) & 0.8655(3)- & \cellcolor{gray} 0.8699(1)= & 0.8698(2) \\
          &       & std.  & 7.4e-04 & 2.3e-02 & 4.0e-03 & 2.0e-04 & 5.9e-03 & 2.1e-03 & 2.0e-03 & 2.2e-03 & 1.9e-03 \\
    \multirow{2}[0]{*}{ZDT2} & \multirow{2}[0]{*}{2} & mean  & 0.21(2)- & 0(3)- & \cellcolor{gray} 0.4436(1) & \cellcolor{gray} 0.5174(1)= & 0.4507(3)- & 0.5108(2) & 0.5327(3)- & 0.5343(2)= & \cellcolor{gray} 0.5348(1) \\
          &       & std.  & 7.4e-12 & 0.0e+00 & 4.0e-02 & 2.6e-03 & 4.4e-02 & 1.4e-02 & 1.6e-03 & 6.7e-03 & 6.2e-03 \\
    \multirow{2}[0]{*}{ZDT3} & \multirow{2}[0]{*}{2} & mean  & 0.4873(2)- & 0.03427(3)- & \cellcolor{gray} 0.7154(1) & 0.6826(2)- & 0.6824(3)- & \cellcolor{gray} 0.719(1) & 0.7173(3)- & 0.7215(2)- & \cellcolor{gray} 0.7217(1) \\
          &       & std.  & 9.6e-02 & 3.7e-02 & 2.2e-03 & 3.7e-02 & 2.6e-02 & 1.6e-03 & 1.2e-02 & 3.7e-03 & 3.8e-03 \\
    \multirow{2}[1]{*}{ZDT4} & \multirow{2}[1]{*}{2} & mean  & 0.845(2)= & 0(3)- & \cellcolor{gray} 0.8487(1) & 0.8599(2)= & 0.8051(3)- & \cellcolor{gray} 0.8599(1) & 0.8431(3)- & 0.8566(2)= & \cellcolor{gray} 0.857(1) \\
          &       & std.  & 1.2e-02 & 0.0e+00 & 8.8e-03 & 5.2e-03 & 4.8e-02 & 4.8e-03 & 1.4e-02 & 8.5e-03 & 9.4e-03 \\
    \midrule
    \multirow{6}[2]{*}{DTLZ1} & \multirow{2}[1]{*}{2} & mean  & 0.2099(2)- & 0(3)- & \cellcolor{gray} 0.7027(1) & 0.7025(2)- & 0.6875(3)- & \cellcolor{gray} 0.7041(1) & 0.7045(3)- & 0.7049(2)= & \cellcolor{gray} 0.7049(1) \\
          &       & std.  & 1.3e-04 & 0.0e+00 & 1.6e-03 & 2.9e-04 & 1.3e-04 & 7.2e-05 & 3.8e-04 & 1.6e-04 & 1.7e-04 \\
          & \multirow{2}[0]{*}{3} & mean  & 0.4923(2)- & 0.2171(3)- & \cellcolor{gray} 1.033(1) & \cellcolor{gray} 1.115(1)+ & 1.078(3)- & 1.113(2) & \cellcolor{gray} 1.101(1)+ & 1.1(2)= & 1.1(3) \\
          &       & std.  & 7.6e-02 & 1.8e-01 & 8.3e-03 & 2.2e-04 & 7.9e-04 & 2.1e-04 & 6.3e-04 & 2.9e-04 & 2.0e-04 \\
          & \multirow{2}[1]{*}{5} & mean  & 0.8878(2)- & 0.7822(3)- & \cellcolor{gray} 1.318(1) & 1.511(3)- & \cellcolor{gray} 1.519(1)+ & 1.516(2) & 1.511(3)- & \cellcolor{gray} 1.513(1)+ & 1.512(2) \\
          &       & std.  & 1.2e-01 & 3.8e-01 & 6.3e-02 & 7.1e-04 & 4.0e-03 & 2.2e-03 & 9.9e-04 & 1.3e-03 & 1.6e-03 \\
    \midrule
    \multirow{6}[2]{*}{DTLZ3} & \multirow{2}[1]{*}{2} & mean  & 0.2099(2)- & 0.00818(3)- & \cellcolor{gray} 0.4138(1) & 0.2099(3)- & 0.4164(2)- & \cellcolor{gray} 0.4171(1) & 0.4197(3)- & \cellcolor{gray} 0.42(1)= & 0.42(2) \\
          &       & std.  & 2.1e-04 & 2.1e-02 & 1.8e-02 & 7.5e-05 & 1.1e-02 & 1.2e-04 & 3.4e-04 & 1.1e-04 & 2.1e-04 \\
          & \multirow{2}[0]{*}{3} & mean  & 0.3307(2)- & 0.1788(3)- & \cellcolor{gray} 0.6244(1) & 0.3317(3)- & 0.6016(2)- & \cellcolor{gray} 0.6508(1) & \cellcolor{gray} 0.7344(1)+ & 0.733(2)= & 0.7329(3) \\
          &       & std.  & 3.9e-04 & 1.4e-01 & 5.6e-03 & 4.0e-03 & 8.8e-03 & 5.2e-03 & 1.3e-03 & 7.3e-04 & 7.9e-04 \\
          & \multirow{2}[1]{*}{5} & mean  & 0.6102(2)- & 0.5385(3)- & \cellcolor{gray} 0.6111(1) & 0.6208(3)- & 0.6493(2)- & \cellcolor{gray} 0.8244(1) & 1.146(3)- & \cellcolor{gray} 1.149(1)= & 1.148(2) \\
          &       & std.  & 1.1e-03 & 4.6e-02 & 8.9e-04 & 1.7e-02 & 7.3e-02 & 6.6e-02 & 1.1e-03 & 3.0e-03 & 2.2e-03 \\
    \midrule
    \multirow{4}[2]{*}{DTLZ5} & \multirow{2}[1]{*}{3} & mean  & 0.131(3)- & 0.161(2)- & \cellcolor{gray} 0.232(1) & 0.131(3)- & 0.2246(2)- & \cellcolor{gray} 0.2331(1) & \cellcolor{gray} 0.2644(1)+ & 0.2556(3)- & 0.2559(2) \\
          &       & std.  & 3.2e-10 & 9.4e-03 & 9.8e-04 & 6.6e-06 & 1.5e-03 & 1.6e-03 & 4.1e-06 & 1.4e-05 & 5.1e-05 \\
          & \multirow{2}[1]{*}{5} & mean  & 0.1464(3)- & \cellcolor{gray} 0.1524(1)+ & 0.1481(2) & 0.1464(3)- & 0.1475(2)- & \cellcolor{gray} 0.1504(1) & \cellcolor{gray} 0.1925(1)+ & 0.15(3)- & 0.1894(2) \\
          &       & std.  & 5.0e-04 & 4.0e-03 & 1.6e-03 & 4.1e-04 & 2.5e-03 & 1.6e-03 & 5.1e-04 & 1.0e-03 & 4.2e-04 \\
    \midrule
    \multirow{4}[2]{*}{MOKP} & \multirow{2}[1]{*}{2} & mean  & 0.8719(2)= & 0.2195(3)- & \cellcolor{gray} 0.8748(1) & \cellcolor{gray} 0.8731(1)= & 0.8227(3)- & 0.8691(2) & 0.8384(3)- & \cellcolor{gray} 0.8503(1)= & 0.8495(2) \\
          &       & std.  & 1.1e-02 & 9.3e-02 & 8.8e-03 & 8.8e-03 & 1.4e-02 & 9.5e-03 & 1.3e-02 & 1.2e-02 & 1.2e-02 \\
          & \multirow{2}[1]{*}{3} & mean  & 0.6274(2)- & 0.1097(3)- & \cellcolor{gray} 0.6561(1) & 0.6588(2)- & 0.636(3)- & \cellcolor{gray} 0.6657(1) & 0.6239(3)- & 0.636(2)= & \cellcolor{gray} 0.6366(1) \\
          &       & std.  & 1.1e-02 & 2.2e-02 & 6.1e-03 & 7.2e-03 & 7.1e-03 & 6.5e-03 & 7.1e-03 & 5.6e-03 & 5.4e-03 \\
    \midrule
    \multirow{4}[2]{*}{MOTSP} & \multirow{2}[1]{*}{2} & mean  & 0.9733(2)- & 0.03373(3)- & \cellcolor{gray} 0.9852(1) & 0.9815(2)= & 0.9491(3)- & \cellcolor{gray} 0.9861(1) & 0.9392(3)- & 0.9679(2)= & \cellcolor{gray} 0.9711(1) \\
          &       & std.  & 1.5e-02 & 4.7e-02 & 1.0e-02 & 1.1e-02 & 1.5e-02 & 9.5e-03 & 1.2e-02 & 1.2e-02 & 1.4e-02 \\
          & \multirow{2}[1]{*}{3} & mean  & 0.9253(2)= & 0.2679(3)- & \cellcolor{gray} 0.929(1) & 0.9016(2)- & 0.8854(3)- & \cellcolor{gray} 0.9131(1) & 0.8174(3)- & \cellcolor{gray} 0.8503(1)= & 0.8452(2) \\
          &       & std.  & 1.3e-02 & 3.8e-02 & 1.1e-02 & 1.4e-02 & 1.2e-02 & 1.1e-02 & 1.5e-02 & 1.6e-02 & 1.7e-02 \\
    \midrule
    \multicolumn{3}{c|}{Total +/-/=} & 0/13/3 & 1/15/0 & \textbackslash{} & 2/10/4 & 1/15/0 & \textbackslash{} & 4/12/0 & 1/3/13 & \textbackslash{} \\
    \bottomrule
    \end{tabular}%
    }
    }
  \caption{Mean and standard deviation of $I_H$ metric values. The rank of each algorithm on each instance is provided after the mean of the $I_H$ metric value. +, - or = denotes that the performance of the corresponding algorithm is statistically better than, worse than or similar to that of MOEA/D-GGR based on Wilcoxon's rank sum test at 0.05 significant level.}
  \label{tab:baseline}%
\end{table*}%


The preference regions of the G$L_p$-based subproblems are illustrated in \figurename~\ref{fig:preference_region_GNS}. In the 2-objective case, the preference regions are almost the same regardless of how $p$-value varies. But the preference region sizes of boundary subproblems are quite small. To cope with this problem, MOEA/D-GGR\footnote{\url{https://github.com/EricZheng1024/MOEA-D-GGR}} is proposed. The difference between MOEA/D-GGR and MOEA/D-GR is the scalarization method and the replacement neighborhood. After determining the replacement neighborhood as MOEA/D-GR does, each boundary subproblem is added to the neighborhood of the closest non-boundary subproblem in MOEA/D-GGR. This modification makes each boundary subproblem's solution have a higher updating probability without degrading the preference regions of other subproblems.

\section{Experimental Studies}
\subsection{Experimental Setup}
\subsubsection{Test Instances.} We use ZDT1-ZDT4~\cite{zitzler2000comparison}, DTLZ1, DTLZ3 and DTLZ5~\cite{deb2005scalable}, the multi-objective knapsack problem (MOKP)~\cite{zitzler1999multiobjective}, and the multi-objective traveling salesman problem (MOTSP)~\cite{corne2007techniques} to verify the algorithm performance. We set the decision vector dimension to 30, 30, 30 and 10 for ZDT1-ZDT4 respectively, and to $(m+4)$ for each DTLZ instance. The MOKP instances are randomly generated with 250 items. The MOTSP instances are randomly generated with 60 vertexes. Each solution is encoded by real numbers for the ZDT and DTLZ instances, by binary numbers for MOKP, and by a permutation for MOTSP.

\subsubsection{General Algorithm Settings.} MOEA/D and MOEA/D-GR are compared with MOEA/D-GGR. All the algorithms are implemented on the PlatEMO platform~\cite{tian2017platemo}. The detailed parameter settings are as follows:
\begin{itemize}
    \item The population size $N$: 100 ($m=2$) or 190 ($m=3$).
    \item The maximal number of function evaluations: 25000 for ZDT1-ZDT4, 100000 for DTLZ1, DTLZ3 and DTLZ5, 200000 for 2-objective MOKP and MOTSP, and 400000 for 3-objective MOKP and MOTSP.
    \item The number of independent runs: 30 for each instance.
    \item The neighborhood size: $T_m=0.1N$ and $T_r=\lceil 0.05N\rceil$.
    \item Reproduction operators: For real number coding, the simulated binary crossover (SBX) and polynomial mutation (PM) are used~\cite{purshouse2007evolutionary}. The SBX control parameters $p_c$, $\eta_c$ and $p_e$ are set to $1$, $20$ and $0$, respectively. The PM control parameters $p_m$ and $\eta_m$ are set to $1/n$ and $20$, where $n$ is the number of decision variables. For binary coding, the uniform crossover and bit-flip mutation are used~\cite{syswerda1989uniform}. The crossover rate is $1$; the mutation rate is $2/n$ for a bit. For permutation coding, the order-based crossover and simple inversion mutation are used~\cite{larranaga1999genetic}. The crossover rate is $1$ and the mutation rate is $0.1$.
\end{itemize}


\subsubsection{Performance Metric.}
The hypervolume indicator ($I_H$) is used to assess the performance of each algorithm~\cite{zitzler1999multiobjective}. Let $P$ be an approximate solution set and $\mathbf{z}^h=(z^h_1,\ldots,z^h_m)^{\intercal}$ be a reference objective vector. The $I_H$ metric value is computed by
\begin{equation}\label{eqn:hv}
    I_H(P,\!\mathbf{z}^h) \!=\! vol \!\left( \bigcup_{\mathbf{x}\in P} [f_1(\mathbf{x}),\!z_1^h]\!\!\times\!\!\ldots\!\!\times\!\![f_m(\mathbf{x}),\!z_m^h] \right)\!,
\end{equation}
where $vol(\cdot)$ is the Lebesgue measure.
A larger $I_H$ metric value indicates a better algorithm performance. Before calculate the $I_H$ metric value, we normalize $\{f_i(\mathbf{x})|x\in P\}$ for $i=1,\ldots,m$ with the range from $\min\{f_i(\mathbf{x})|\mathbf{x}\in PS\}$ to $\max\{f_i(\mathbf{x})|\mathbf{x}\in PS\}$. Then set $z_i^h = 1.1$ for $i=1,\ldots,m$. The $PF$s are unknown on the MOKP and MOTSP instances. Each of them is approximated by the set of all non-dominated solutions obtained by all algorithms in all runs.

\begin{figure}[t]
\centering
\subfigure{
    \includegraphics[width=0.14\textwidth]{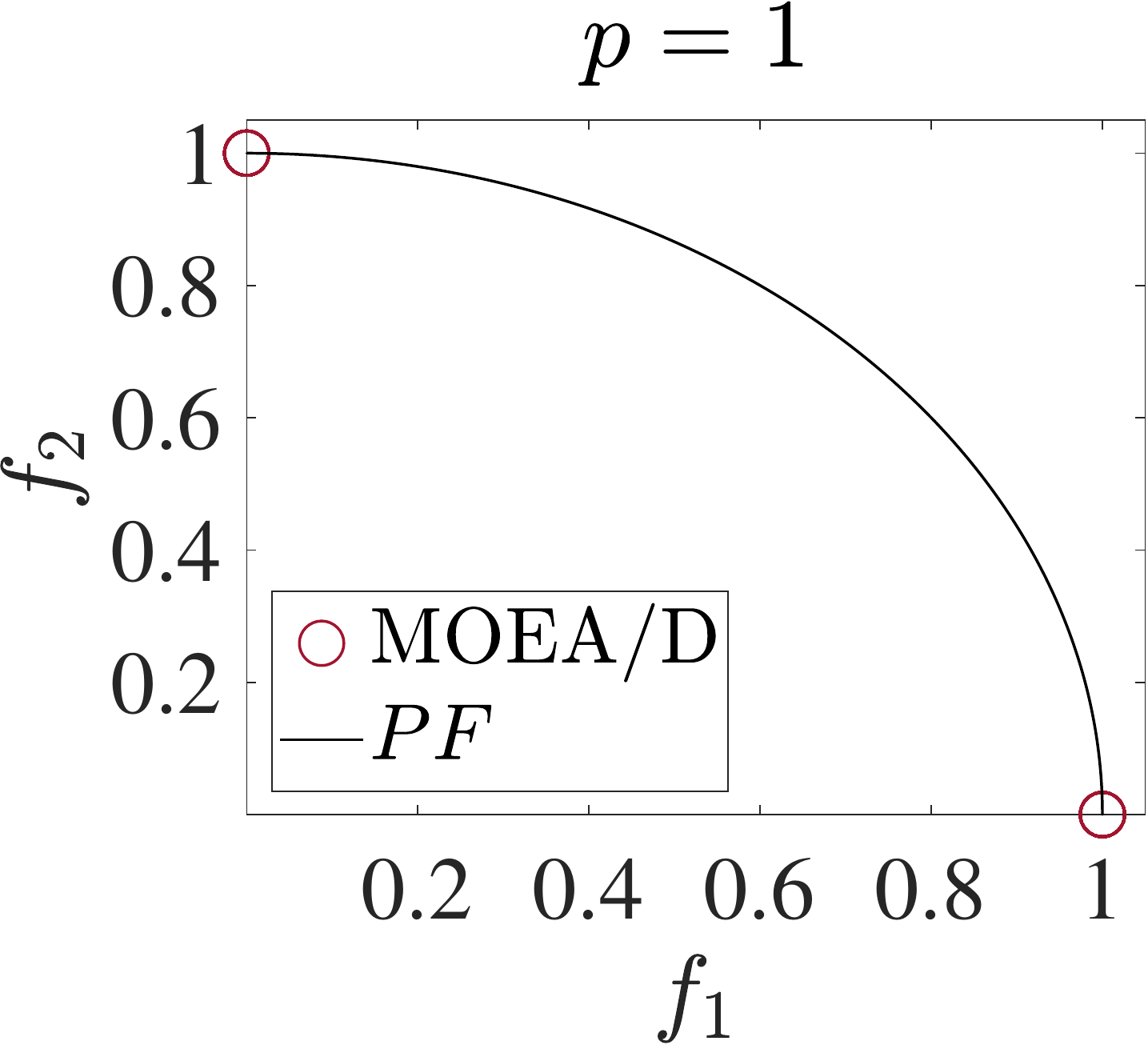}
}
\subfigure{
    \includegraphics[width=0.14\textwidth]{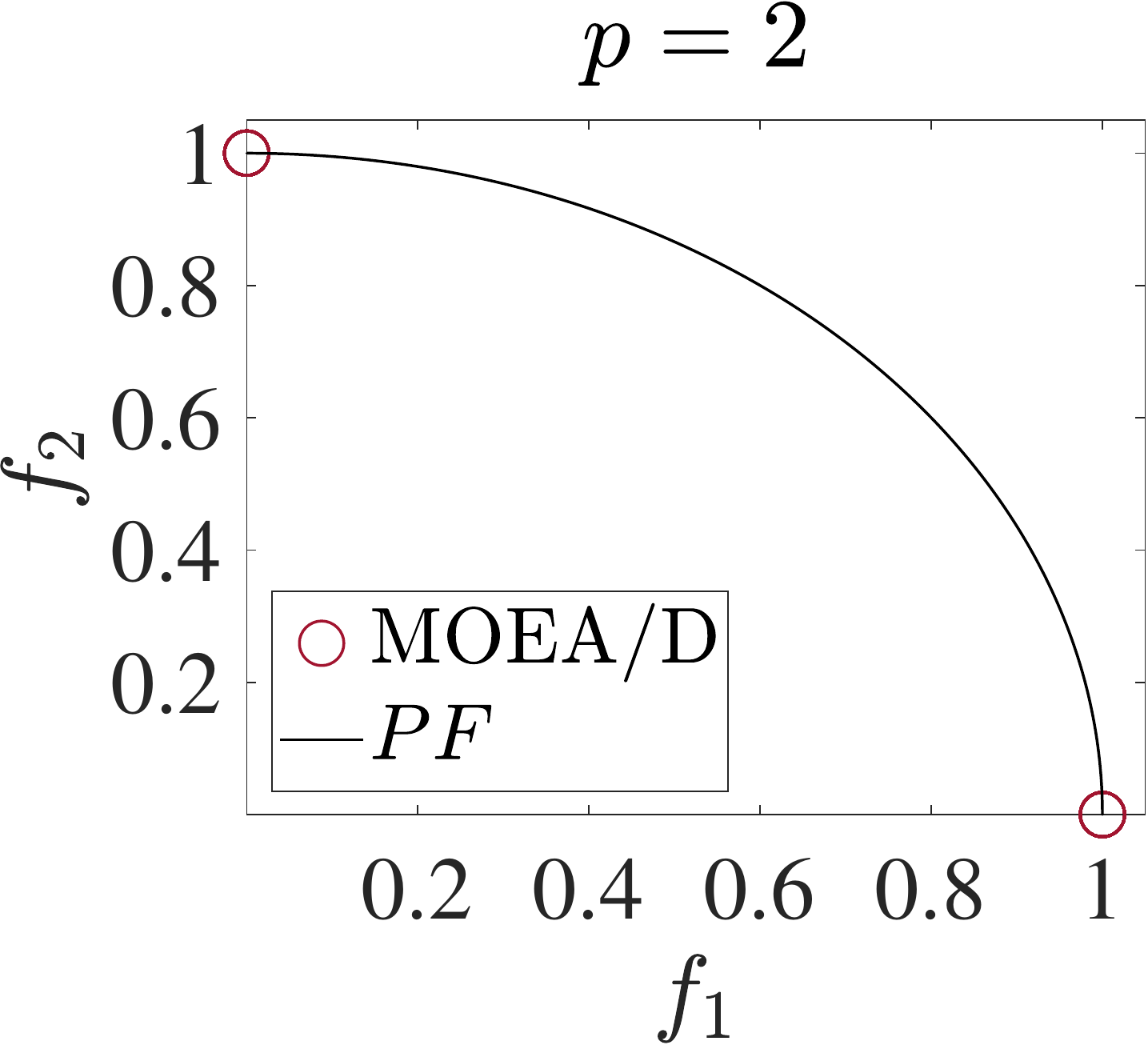}
}
\subfigure{
    \includegraphics[width=0.14\textwidth]{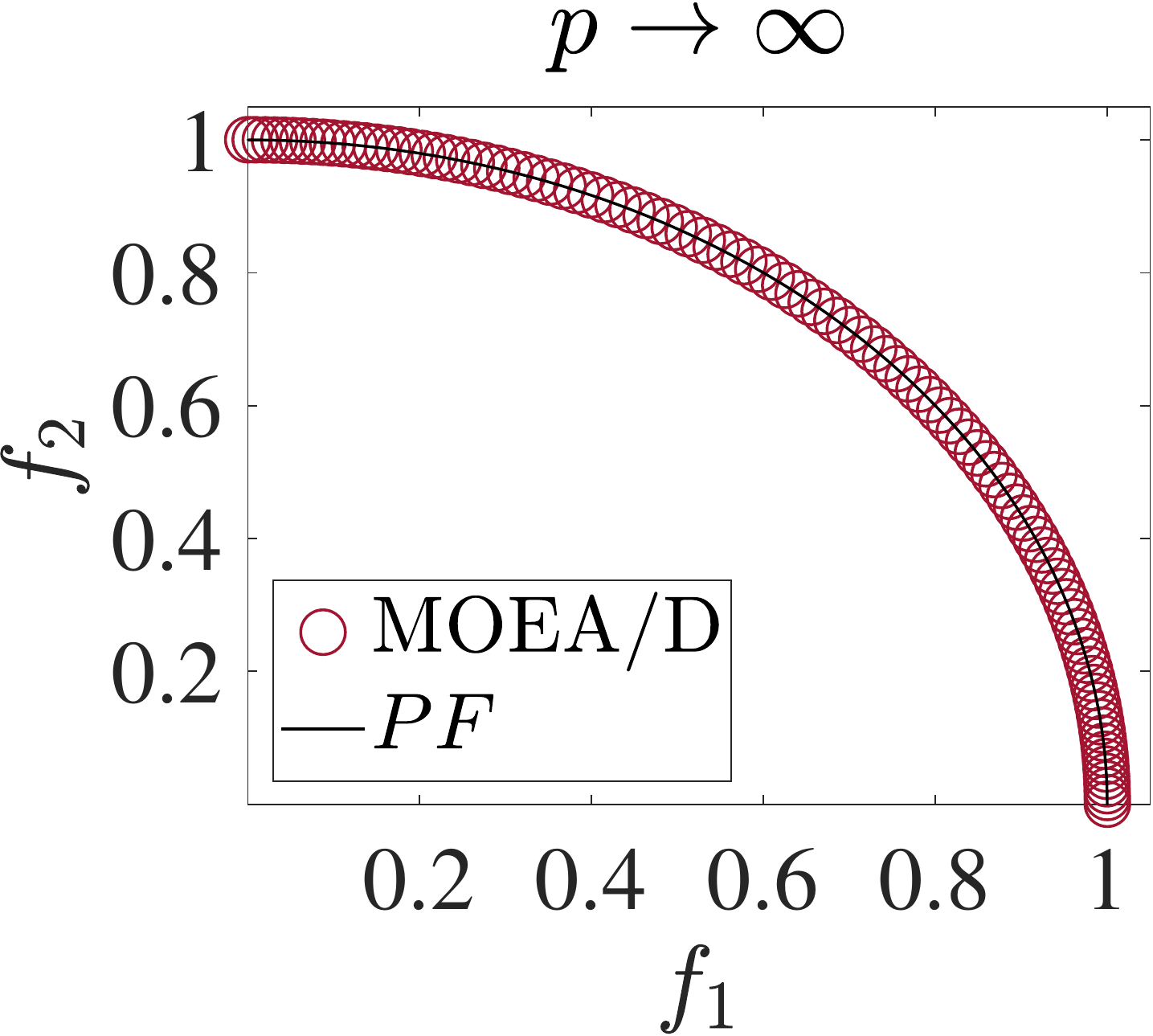}
}
\subfigure{
    \includegraphics[width=0.14\textwidth]{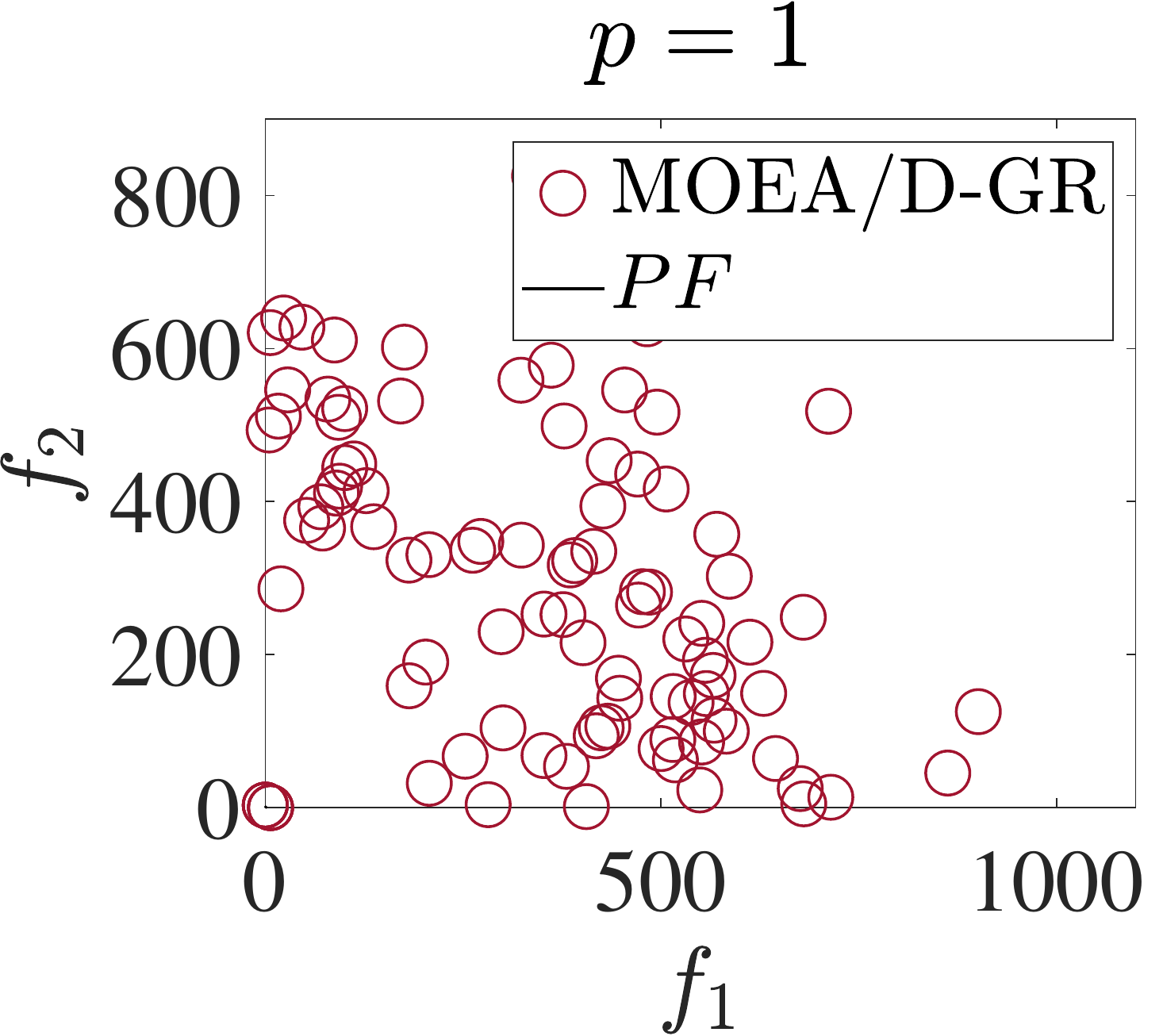}
}
\subfigure{
    \includegraphics[width=0.14\textwidth]{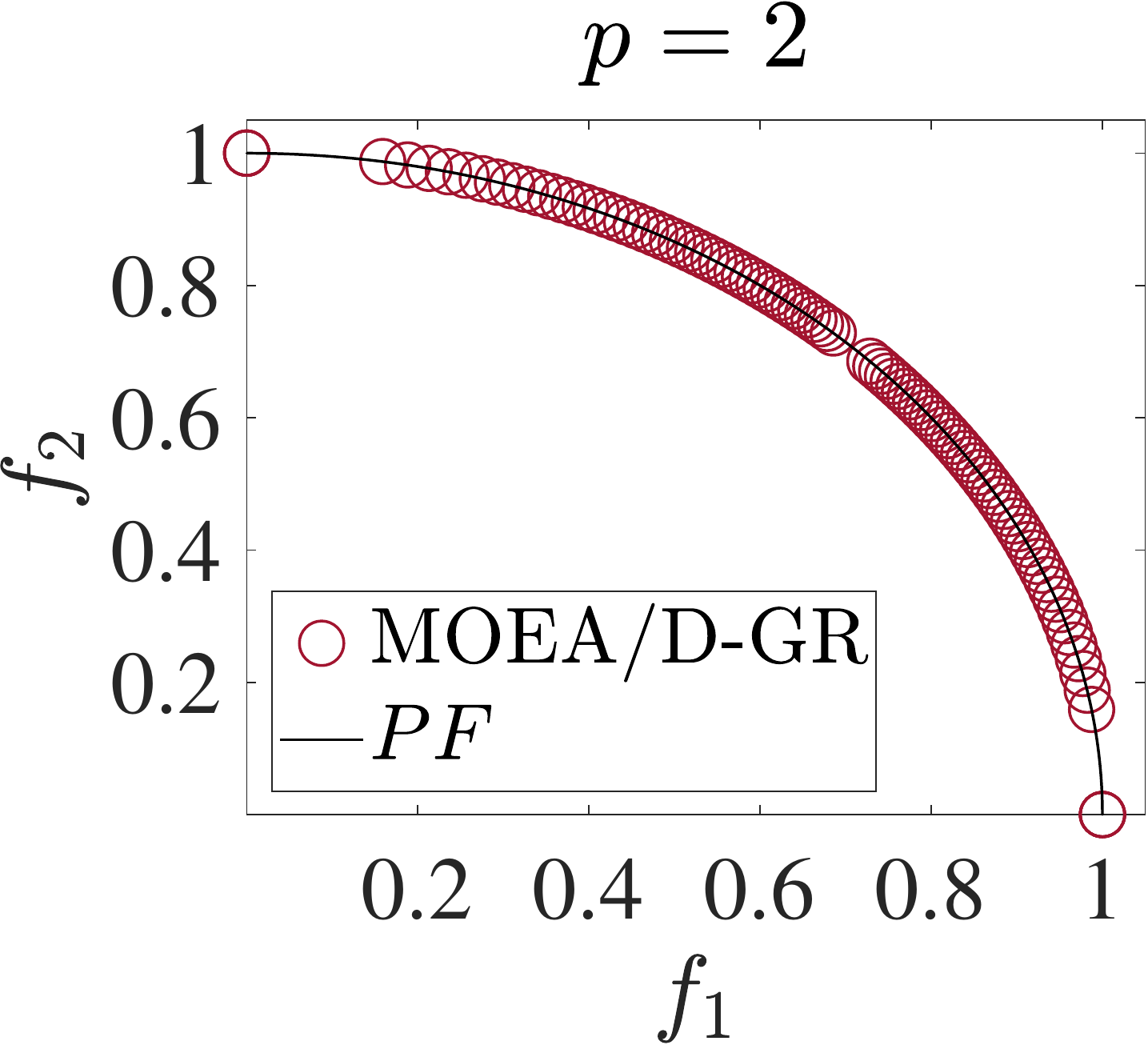}
}
\subfigure{
    \includegraphics[width=0.14\textwidth]{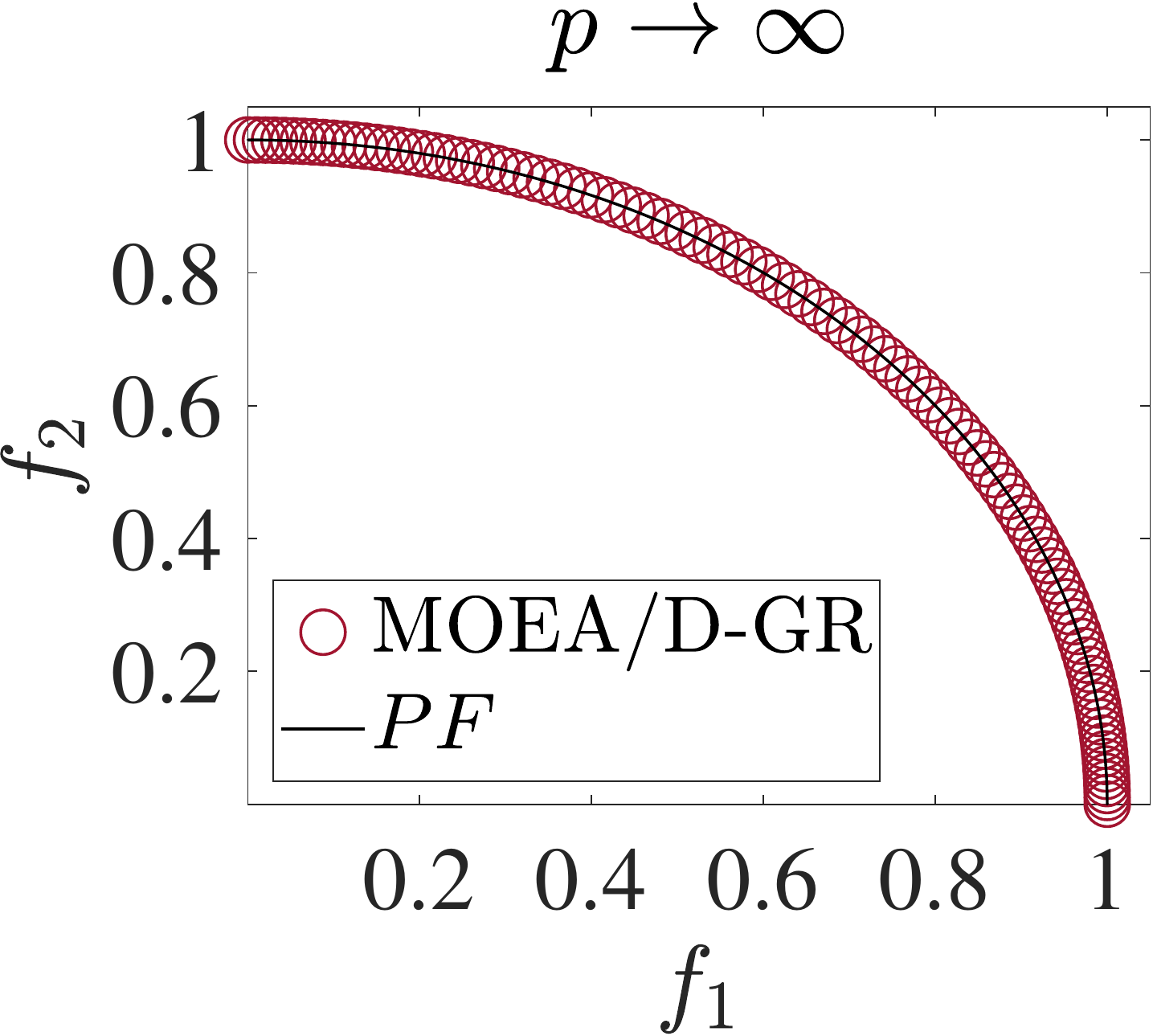}
}
\subfigure{
    \includegraphics[width=0.14\textwidth]{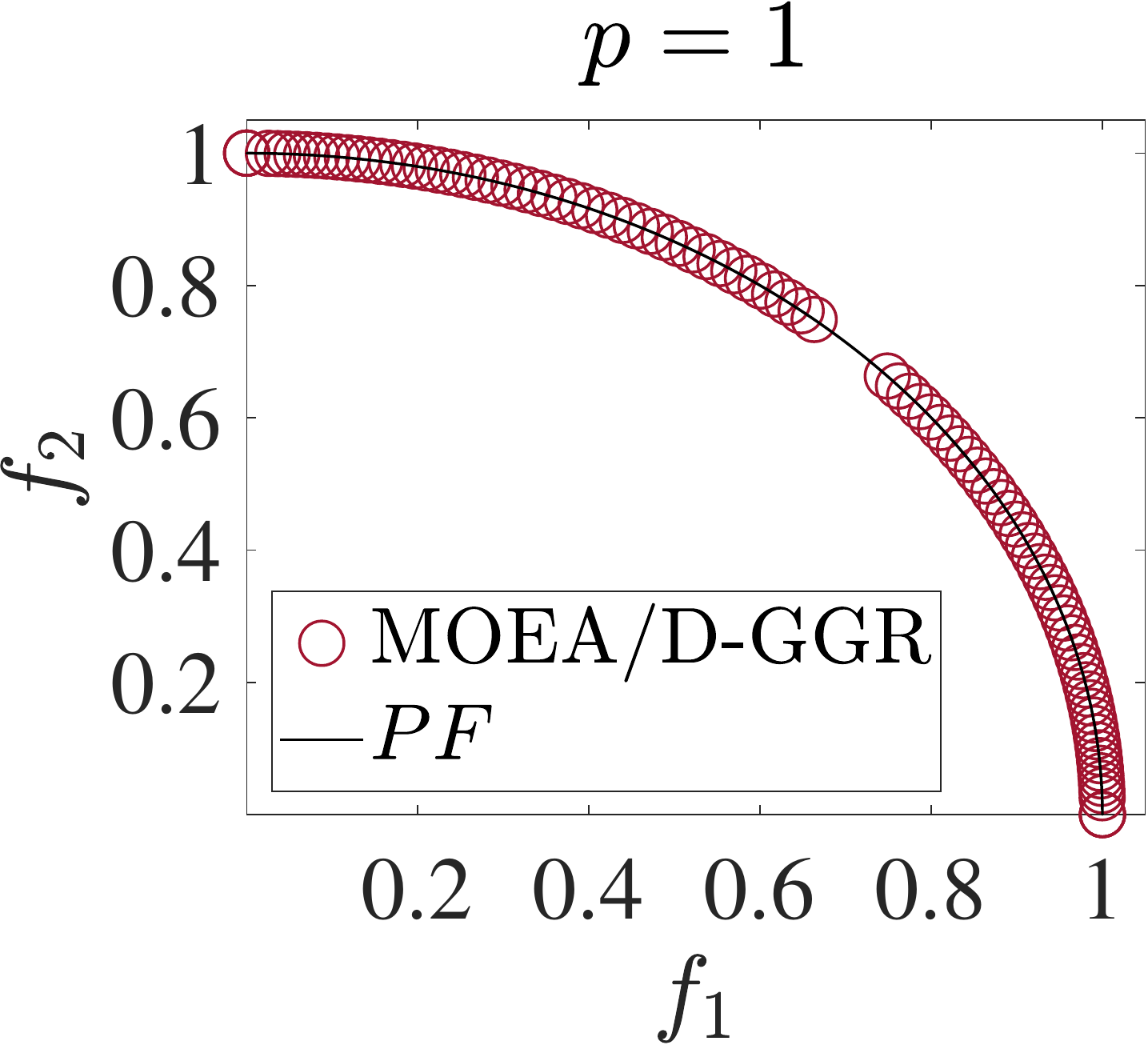}
}
\subfigure{
    \includegraphics[width=0.14\textwidth]{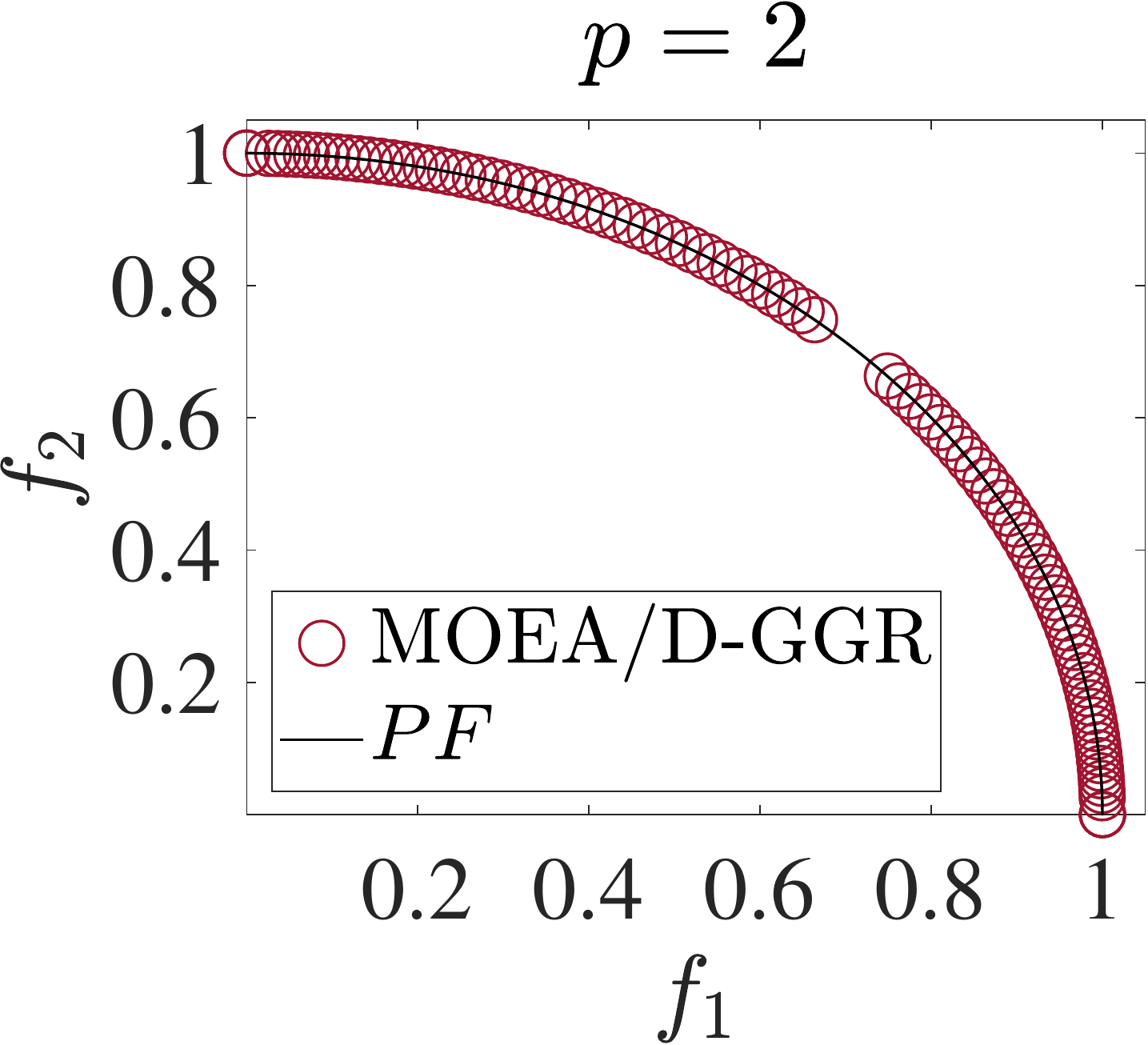}
}
\subfigure{
    \includegraphics[width=0.14\textwidth]{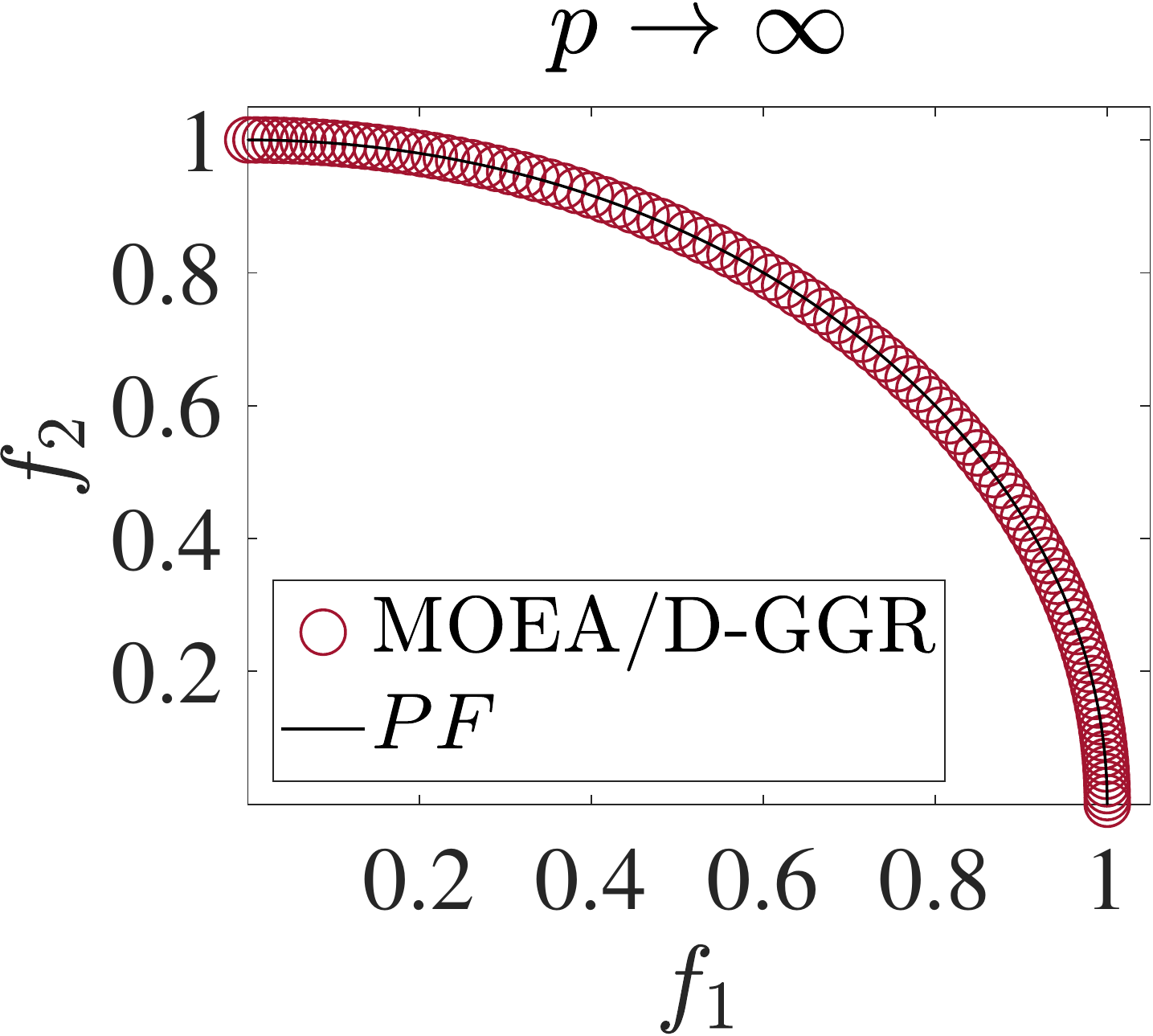}
}
\caption{Plots of the objective vectors having the median $I_H$ metric value obtained among each algorithm's 30 runs on 2-objective DTLZ3.}
\label{fig:pop}
\end{figure}

\subsection{Experimental Results}
The $I_H$ metric values obtained by the three algorithms on 16 test instances are given in Tables \ref{tab:baseline}. Our theoretical analysis holds on the problem with many-objective (\eg, DTLZ1, DTLZ3 and DTLZ5 with 5 objectives), discrete objective space (\eg, MOKP and MOTSP), convex $PF$ (\eg, ZDT1 and ZDT4), concave $PF$ (\eg, ZDT2 and DTLZ3), linear $PF$ (\eg, DTLZ1), discontinuous $PF$ (\eg, ZDT3). 

When $p\rightarrow\infty$, MOEA/D-GGR and MOEA/D-GR have similar performance; MOEA/D is slightly worse than the two algorithms. When $p=2$, MOEA/D-GGR is better than the other competitors on most test problems. When $p=1$, MOEA/D-GGR significantly outperforms MOEA/D and MOEA/D-GR on all test problems except 5-objective DTLZ5. This instance has a highly degenerate $PF$. The direction vectors have few intersections with such a $PF$, making our mismatch avoidance strategy ineffective.

\figurename~\ref{fig:pop} plots the obtained final objective vectors which have the median $I_H$ metric values from each algorithm's 30 runs on 2-objective DTLZ3. When $p=1$, MOEA/D-GGR is the only algorithm that overcomes mismatches, which achieves a satisfactory approximation to the $PF$; MOEA/D and MOEA/D-GR fail to approximate the $PF$. When $p=2$, MOEA/D still can only obtain the two boundary objective vectors; the objective vectors obtained by MOEA/D-GR cover the middle part but miss some boundary parts of the $PF$; MOEA/D-GGR yields the best approximation of the $PF$ compared to the other two algorithms. When $p\rightarrow\infty$, the three algorithms have similar good performance.

It is worth mentioning that the approximate set obtained by MOEA/D-GGR using G$L_1$ or G$L_2$ misses a small central part of the $PF$. It is because the diversity of MOEA/D-GGR is affected by the setting of $T_r$ when G$L_p$ with $1\leq p <\infty$ is used. When a large $T_r$ is adopted, MOEA/D-GGR using G$L_p$ with $1\leq p <\infty$ may miss some non-dominated objective vectors. In other words, its diversity can be improved by employing a small $T_r$. To validate it, we set $T_r$ to 1 for MOEA/D-GGR with G$L_1$ and G$L_2$ and further test the two algorithms on the 2-objective DTLZ3. As shown in \figurename~\ref{fig:pop_Tr1}, the obtained approximate set has much better diversity in both cases. But \figurename~\ref{fig:conv} also indicates that a small $T_r$ may discourage convergence. As reported in~\cite{wang2015adaptive}, a better approach is to adaptively change $T_r$ during the search.

Overall, MOEA/D-GGR is less affected by the $PF$ shape than MOEA/D and MOEA/D-GR.

\begin{figure}[t]
\centering
\subfigure{
    \includegraphics[width=0.18\textwidth]{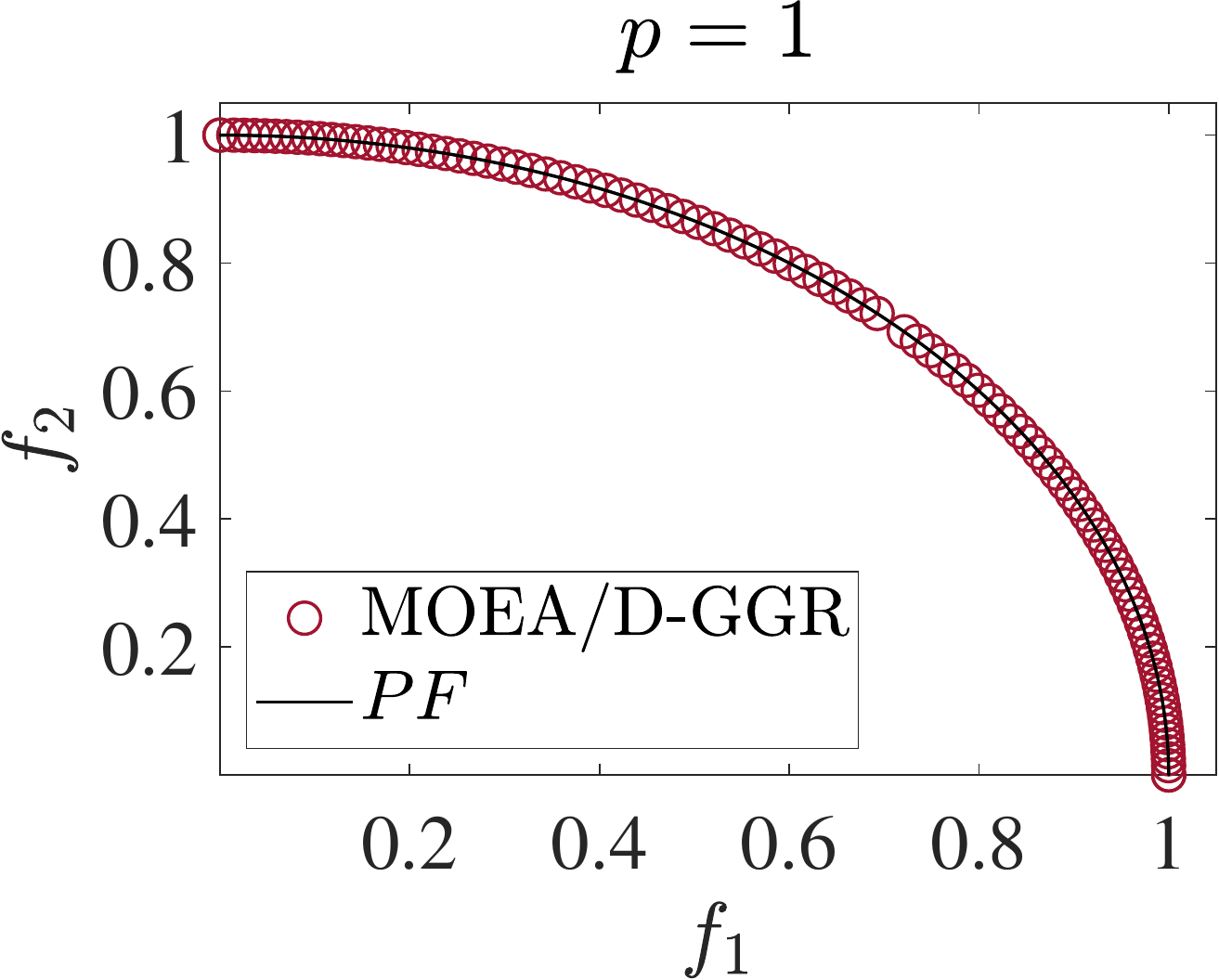}
}
\subfigure{
    \includegraphics[width=0.18\textwidth]{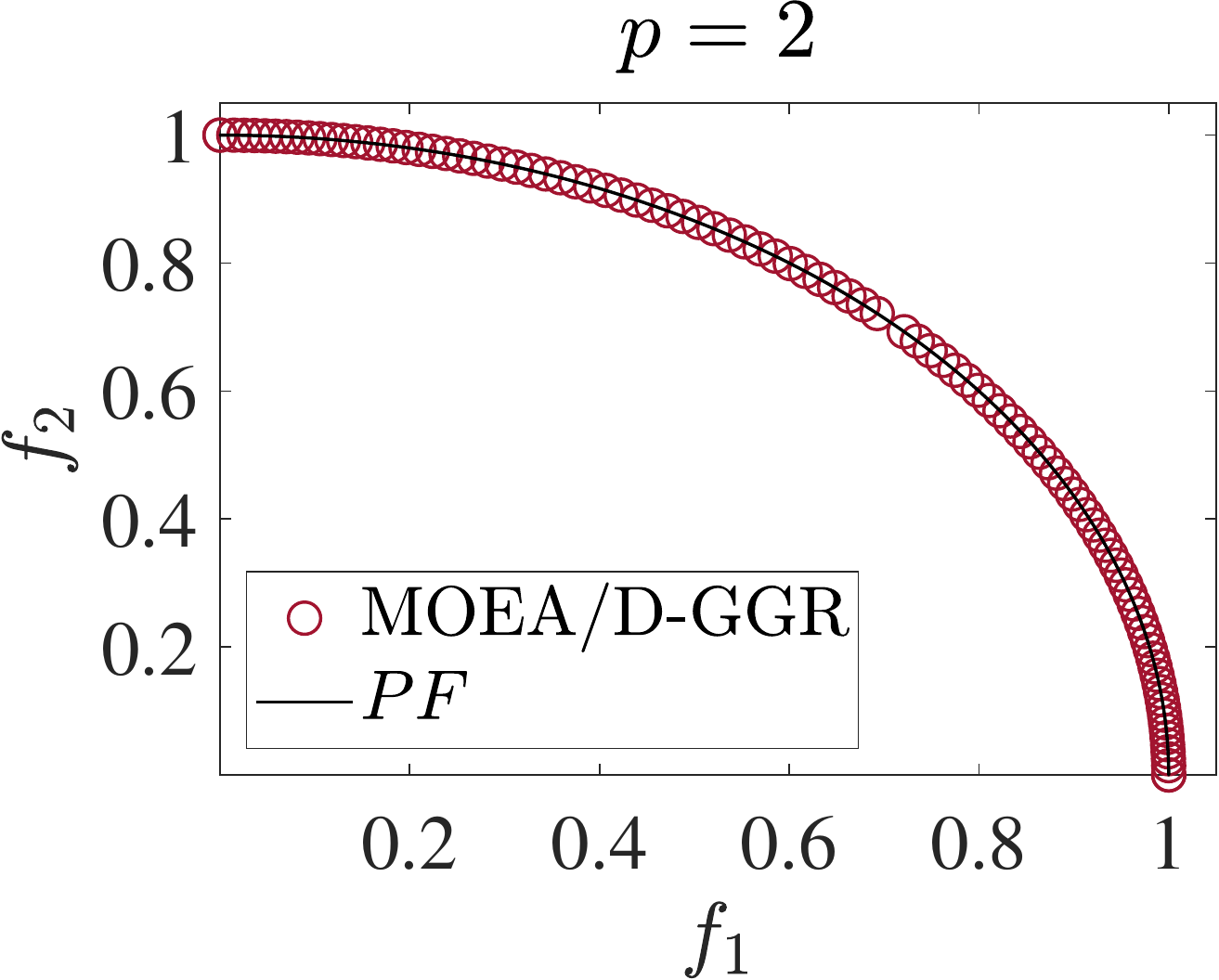}
}
\caption{Plots of the objective vectors having the median $I_H$ metric value obtained by MOEA/D-GGR ($p=1$ and $p=2$) using $T_r=1$ on 2-objective DTLZ3.}
\label{fig:pop_Tr1}
\end{figure}

\begin{figure}[t]
\centering
\subfigure{
    \includegraphics[width=0.18\textwidth]{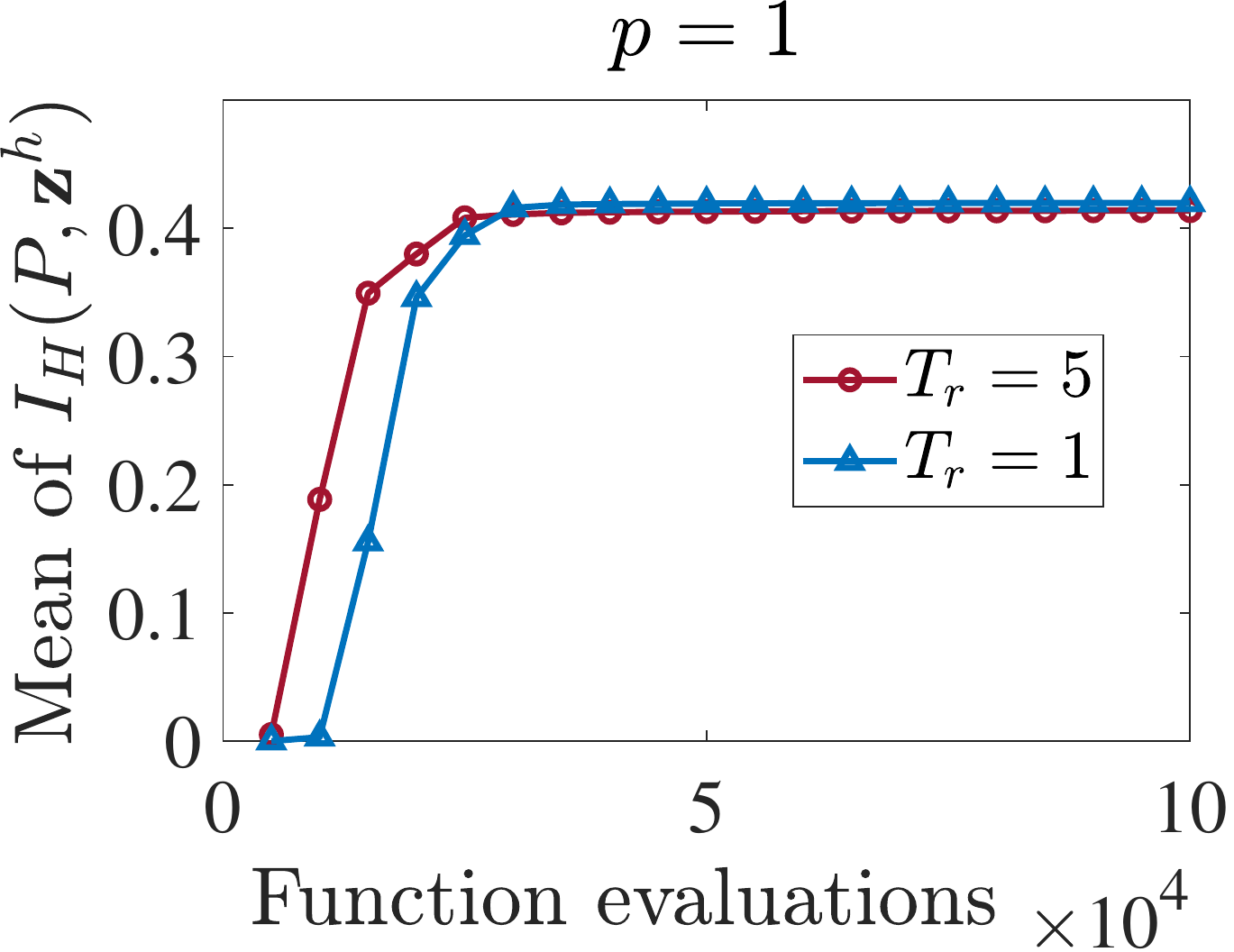}
}
\subfigure{
    \includegraphics[width=0.18\textwidth]{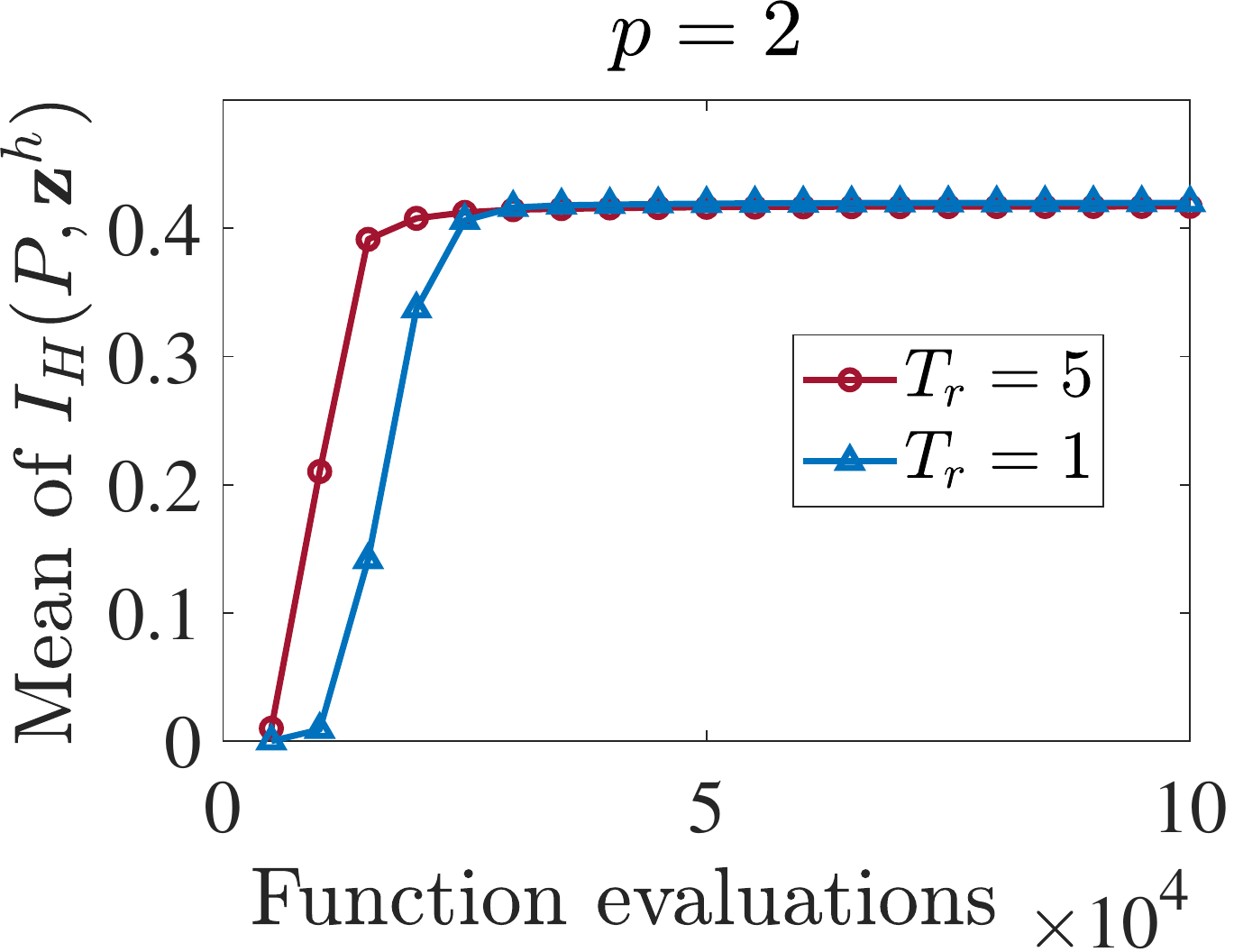}
}
\caption{Evolution of the mean $I_H$ metric values achieved by MOEA/D-GGR using two different $T_r$ settings on 2-objective DTLZ3.}
\label{fig:conv}
\end{figure}


\section{Conclusion}

In this paper, we have demonstrated that MOEA/D-GR still suffers from mismatches when the $L_{\infty}$ scalarization is replaced by another $L_{p}$ scalarization with $p\in [1,\infty)$. Our analysis reveals that this can be attributed to $L_p$-based ($1\leq p<\infty$) subproblems having inconsistently large preference regions. When $p$ is set to a small value, some middle subproblems have very small preference regions so that their direction vectors cannot pass through their corresponding preference regions. To fill this gap, we have proposed a new scalarization family called the G$L_p$ scalarization. The G$L_{p}$-based subproblem's direction vector is guaranteed to pass through its corresponding preference region, which implies MOEA/D-GR can always avoid mismatches when using the G$L_p$ scalarization for any $p\geq 1$. We have conducted various experimental studies to validate the effectiveness of the G$L_p$ scalarization.




\section{Acknowledgments}
This work was supported in part by the National Natural Science Foundation of China under Grant 62106096; and in part by the Shenzhen Technology Plan under Grant JCYJ20220530113013031.

\bibliography{aaai23.bib}


\clearpage
\appendix

\end{document}